\documentclass{article}

\usepackage{microtype}
\usepackage{graphicx}
\usepackage{subcaption}
\usepackage{booktabs} 

\usepackage{hyperref}

\usepackage[preprint]{icml2026}

\usepackage{amsmath}
\usepackage{amssymb}
\usepackage{mathtools}
\usepackage{amsthm}
\theoremstyle{plain}

\newtheorem*{remark}{Remark}

\usepackage[capitalize,noabbrev,nameinlink]{cleveref}

\theoremstyle{plain}

\usepackage[textsize=tiny]{todonotes}

\usepackage{amsmath,amsfonts,bm}

\def\eqref#1{equation~\ref{#1}}

\def\1{\bm{1}}

\def\rvx{{\mathbf{x}}}

\def\rmA{{\mathbf{A}}}

\def\rmD{{\mathbf{D}}}

\def\rmI{{\mathbf{I}}}

\def\rmW{{\mathbf{W}}}
\def\rmX{{\mathbf{X}}}

\DeclareMathAlphabet{\mathsfit}{\encodingdefault}{\sfdefault}{m}{sl}
\SetMathAlphabet{\mathsfit}{bold}{\encodingdefault}{\sfdefault}{bx}{n}

\def\gF{{\mathcal{F}}}

\def\gI{{\mathcal{I}}}

\def\gW{{\mathcal{W}}}
\def\gX{{\mathcal{X}}}
\def\gY{{\mathcal{Y}}}
\def\gZ{{\mathcal{Z}}}

\newcommand{\E}{\mathbb{E}}

\DeclareMathOperator*{\argmax}{arg\,max}

\newcommand{\Lip}{\operatorname{Lip}}

\newcommand{\trainr}{R_{m,\gamma}(f\circ\phi;\pi)}
\newcommand{\testr}{R_{u}(f\circ\phi;\pi)}

\newcommand{\traini}{\gI_{\mathrm{train}}^{(\pi)}}
\newcommand{\testi}{\gI_{\mathrm{test}}^{(\pi)}}

\newcommand{\globalWD}{\gW_G}
\newcommand{\classWD}{\gW_C}
\newcommand{\gammaWD}{\gW_S}

\usepackage{amsmath,amssymb,amsfonts}
\usepackage{amsthm}
\usepackage{dsfont}
\usepackage{enumitem}
\usepackage{thm-restate}
\usepackage[most]{tcolorbox}
\usepackage{xcolor}
\usepackage{colortbl}
\usepackage{wrapfig}
\usepackage{pgfplots}
\usepackage{tikz}
\pgfplotsset{compat=1.18}
\usepgfplotslibrary{fillbetween, groupplots}
\usepackage{xcolor}
\usepackage{thm-restate}
\tcbuselibrary{skins, breakable}

\definecolor{colorGlobal}{HTML}{4A90D9}
\definecolor{colorOracle}{HTML}{E85A5A}
\definecolor{colorSep}{HTML}{8E7CC3}
\definecolor{colorGen}{HTML}{27AE60}

\usepackage{wrapfig}

\tcbset{
  mythmbox/.style={
    enhanced,
    colback=gray!8,     
    colframe=black!35,  
    boxrule=0.6pt,
    arc=2mm,
    left=6pt, right=6pt, top=6pt, bottom=6pt,
  },
  mythmbox-lemma/.style={colback=gray!8,  colframe=black!35},
  mythmbox-def/.style={colback=gray!8, colframe=black!35},
}

\declaretheoremstyle[
  headfont=\bfseries,
  headpunct={.},
  postheadspace=0.6em,
  spaceabove=6pt, spacebelow=6pt,
  bodyfont=\itshape,
]{plainthm}
\declaretheorem[style=plainthm, numberwithin=section, name=Theorem]{theorem}

\declaretheoremstyle[
  headfont=\bfseries,
  headpunct={.},
  postheadspace=0.6em,
  spaceabove=10pt, spacebelow=10pt,
  bodyfont=\itshape,
]{plainlemma}
\declaretheorem[style=plainlemma, sibling=theorem, name=Lemma]{lemma}

\declaretheoremstyle[
  headfont=\bfseries,
  headpunct={.},
  postheadspace=0.6em,
  spaceabove=10pt, spacebelow=10pt,
  bodyfont=\normalfont,
]{plaindefinition}
\declaretheorem[style=plaindefinition, sibling=theorem, name=Definition]{definition}

\usepackage{amsmath,amssymb,amsthm}
\usepackage{mathrsfs}
\usepackage{bbm}
\usepackage{enumitem}
\usepackage{graphicx}
\usepackage{hyperref}
\usepackage{booktabs}

\usepackage{mathtools}
\usepackage{duckuments}
\usepackage{soul}
\usepackage{multirow}
\usepackage{cancel}
\usepackage{tabularx}

\definecolor{BrickRed}{rgb}{0.6,0,0}
\definecolor{RoyalBlue}{rgb}{0,0,0.8}
\definecolor{Tdgreen}{rgb}{0,0.4,0.7}
\definecolor{cadmiumgreen}{rgb}{0.0, 0.42, 0.24}
\hypersetup{colorlinks, citecolor=RoyalBlue, linkcolor=RoyalBlue}

\icmltitlerunning{Transductive Generalization via Optimal Transport and Its Application to Graph Node Classification}

\begin{document}

\twocolumn[
  \icmltitle{Transductive Generalization via Optimal Transport \\and Its Application to Graph Node Classification}



  \icmlsetsymbol{equal}{*}

  \begin{icmlauthorlist}
    \icmlauthor{MoonJeong Park}{equal,yyy}
    \icmlauthor{Seungbeom Lee}{equal,yyy}
    \icmlauthor{Kyungmin Kim}{yyy}
    \icmlauthor{Jaeseung Heo}{yyy}
    \icmlauthor{Seunghyuk Cho}{yyy}
    \icmlauthor{Shouheng Li}{comp}
    \icmlauthor{Sangdon Park}{yyy,sch}
    \icmlauthor{Dongwoo Kim}{yyy,sch}
  \end{icmlauthorlist}

  \icmlaffiliation{yyy}{Graduate School of Artificial Intelligence, POSTECH, South Korea}
  \icmlaffiliation{comp}{CSIRO’s Data61, Canberra, Australia}
  \icmlaffiliation{sch}{Department of Computer Science \& Engineering, POSTECH, South Korea}

  \icmlcorrespondingauthor{Dongwoo Kim}{dongwoo.kim@postech.ac.kr}

  \icmlkeywords{Machine Learning, ICML}

  \vskip 0.3in
]



\printAffiliationsAndNotice{\icmlEqualContribution}

\begin{abstract}
Many existing transductive bounds rely on classical complexity measures that are computationally intractable and often misaligned with empirical behavior. In this work, we establish new representation-based generalization bounds in a distribution-free transductive setting, where learned representations are dependent, and test features are accessible during training. We derive global and class-wise bounds via optimal transport, expressed in terms of Wasserstein distances between encoded feature distributions. We demonstrate that our bounds are efficiently computable and strongly correlate with empirical generalization in graph node classification, improving upon classical complexity measures. Additionally, our analysis reveals how the GNN aggregation process transforms the representation distributions, inducing a trade-off between intra-class concentration and inter-class separation. This yields depth-dependent characterizations that capture the non-monotonic relationship between depth and generalization error observed in practice. The code is available at \url{https://github.com/ml-postech/Transductive-OT-Gen-Bound}.
\end{abstract}

    
\section{Introduction}
Understanding and predicting generalization is a central problem in modern machine learning. Classical generalization theory traditionally relies on hypothesis-class–based complexity measures such as VC dimension~\citep{vapnik2015uniform}, Rademacher complexity~\citep{bartlett2002rademacher}, stability~\citep{el2006stable}, and PAC-Bayesian~\citep{begin2014pac}. These tools provide foundational guarantees, but they often fail to explain the generalization behavior of modern models. In practice, resulting bounds are frequently vacuous or correlate weakly, even negatively, with empirical generalization performance~\citep{jiang2019fantastic,lyle2023understanding, nagarajan2019uniform} as shown in~\cref{fig:scatter_plot_rank}~(a).
\begin{figure}[t]
    \centering
    \resizebox{\linewidth}{!}{%
    \begin{tikzpicture}
        \begin{axis}[
            name=plot1,
            width=7cm,
            height=6cm,
            xmin=0, xmax=19,
            ymin=0, ymax=19,
            xlabel={\large{Bound rank}},
            ylabel={\large{Rank of gen. error}},
            xlabel style={yshift=-5pt},
            tick label style={font=\large},
            ymajorgrids=true,
            grid style={dashed, gray!50},
            axis line style={thick, black},
            every outer x axis line/.append style={black, thick},
            every outer y axis line/.append style={black, thick},
            xtick={5, 10, 15},
            ytick={5, 10, 15},
            major tick length=0pt,
        ]
        \addplot[only marks, mark=*, mark size=5pt, fill=colorGlobal, draw=white, thick] coordinates {
            (16, 18)
            (7, 17)
            (2, 15)
            (18, 14)
            (11, 16)
            (5, 12)
            (17, 9)
            (12, 13)
            (6, 8)
            (15, 10)
            (10, 11)
            (4, 6)
            (14, 5)
            (9, 7)
            (3, 3)
            (13, 1)
            (8, 4)
            (1, 2)
        };
        \end{axis}
        
        \begin{axis}[
            name=plot2,
            at={(plot1.east)},
            anchor=west,
            xshift=0.8cm,
            width=7cm,
            height=6cm,
            xmin=0, xmax=19,
            ymin=0, ymax=19,
            xlabel={\large{Bound rank}},
            ylabel={},
            xlabel style={yshift=-5pt},
            tick label style={font=\large},
            ymajorgrids=true,
            grid style={dashed, gray!50},
            axis line style={thick, black},
            every outer x axis line/.append style={black, thick},
            every outer y axis line/.append style={black, thick},
            xtick={5, 10, 15},
            ytick={5, 10, 15},
            major tick length=0pt,
        ]
        \addplot[only marks, mark=*, mark size=5pt, fill=colorGlobal, draw=white, thick] coordinates {
            (18, 18)
            (16, 17)
            (11, 15)
            (17, 14)
            (15, 16)
            (9, 12)
            (13, 9)
            (14, 13)
            (8, 8)
            (12, 10)
            (10, 11)
            (5, 6)
            (6, 5)
            (7, 7)
            (3, 3)
            (1, 1)
            (4, 4)
            (2, 2)
        };
        \end{axis}
        
        \node[font=\Large] at (plot1.below south) [yshift=-0.8cm] {(a) \textbf{PAC bound}};
        \node[font=\Large] at (plot2.below south) [yshift=-0.8cm] {(b) \textbf{Our bound}};
        
    \end{tikzpicture}
    }%
    \caption{Rank scatter plots of the empirical generalization error against (a) the PAC bound and (b) our proposed bound for SGC on the Squirrel dataset. The PAC bound shows weak rank correlation with the empirical generalization error, whereas our bound exhibits a stronger positive rank correlation.}
    \label{fig:scatter_plot_rank}
\end{figure}
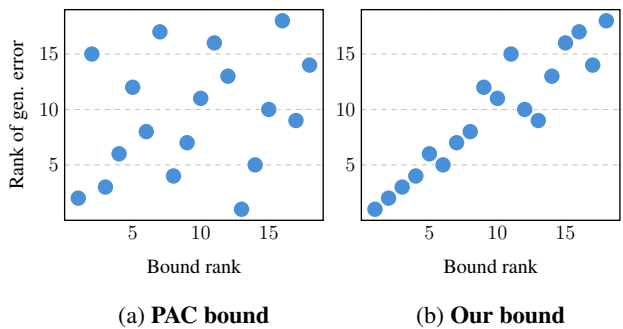

An alternative line of work has therefore shifted attention from abstract hypothesis classes to the learned representations themselves. Representation-based complexity measures, evaluated on the features produced by a trained model, show much stronger predictive power for empirical generalization~\citep{natekar2020representationcompwin}. In particular, Optimal Transport (OT) based bounds relate generalization error to the geometry of class-conditional feature distributions and show a strong correlation with generalization error in practice~\citep{chuang2021measuring,li2025towards}. 

However, the bounds are primarily derived in the inductive setting, where representations are assumed to be independent and identically distributed (i.i.d.). While many applications are naturally formulated in the inductive setting, a substantial class of real-world problems is better captured by the transductive viewpoint. In transductive learning, the learner has access to the features of both training and test points, but only the training data is labeled. Unlike inductive settings, transductive problems involve dependent examples linked by a known structure. Graph-based node classification is a representative example: the entire graph structure and all node features are observed, but labels are revealed for only a subset of nodes. Message-passing encoders, such as Graph Neural Networks (GNNs), construct each node representation by aggregating information across edges, making representations dependent on the graph structure. Distribution-free transductive learning theory provides theoretically valid guarantees even when representations are dependent, unlike bounds that rely on i.i.d. assumptions. 

However, most transductive guarantees remain rooted in classical complexity notions~\citep{el2009transductive,EsserNEURIPS2021_learning,begin2014pac,el2006stable}, making them computationally challenging and often misaligned with empirical generalization. As a result, \emph{there is currently no transductive generalization bound that leverages representation geometry}, while remaining effectively computable and well aligned with empirical results.

This work fills that gap. We develop two new representation-based generalization error bounds for transductive learning via optimal transport. We formulate both bounds via empirical 1-Wasserstein distances between encoded feature distributions: \textbf{(1) Global bound (\cref{thm:global-ot}):} the generalization gap is controlled by the Wasserstein distance between the encoded training and test feature distributions; \textbf{(2) Class-wise bound (\cref{thm:classwise-ot}):} the generalization gap is controlled by class-conditional Wasserstein distances, which capture intra-class concentration and inter-class separation. We empirically validate that the proposed bounds correlate strongly and consistently with generalization error on graph node classification across multiple datasets and GNN architectures.


Furthermore, by deriving depth-dependent upper bounds on the Wasserstein terms, we explicitly characterize how GNN aggregation transforms feature representations. Our analysis reveals that depth induces a fundamental trade-off: it improves generalization by enhancing intra-class concentration, but simultaneously harms it by reducing inter-class separation. This competing dynamic naturally explains the non-monotonic relationship between depth and generalization error, a phenomenon that prior monotonic bounds cannot capture.

In summary, our contributions are:
\begin{itemize}
    \item We propose two representation-based generalization error bounds via optimal transport in a distribution-free transductive setting. \textit{(\cref{sec:WB-Transductive})}
    \item Our bounds are practically computable, and experiments on GNN node classification show consistent alignment with empirical generalization error. \textit{(\cref{sec:exp})}
    \item We conduct depth-dependent analysis on GNNs with our bounds and describe the non-monotonic relationship between GNN depth and generalization error. \textit{(\cref{sec:analysis})}
\end{itemize}

\section{Related work}
\paragraph{Representation-based generalization bounds}
Representation based complexity measures have been proposed as alternatives to classical notions such as VC-dimension~\citep{vapnik2015uniform} or norm-based complexity~\citep{bartlett2017spectrally}. 
\citet{natekar2020representationcompwin} introduced a representation-based measure that demonstrated stronger predictive power of generalization compared to traditional measures in the Predicting Generalization in Deep Learning competition~\citep{jiang2020neuripscomp}. 
\citet{chuang2021measuring} develop a margin-based bound~\citep{bartlett2017spectrally, jiang2019predicting} incorporating the $k$-variance~\citep{solomon2022k}, derived from optimal transport, to account for structural properties of learned feature distributions. 
\citet{li2025towards} extended this line of work to graph classification tasks in an inductive learning setting, characterizing the representation space of graphs through the expressivity of GNN models.

\paragraph{Transductive generalization bounds}
Several works study generalization guarantees in the distribution-free transductive setting, where a fixed finite dataset is randomly partitioned into training and test sets. In this regime, model-agnostic bounds are typically derived using classical complexity tools. Transductive Rademacher complexity~\citep{el2009transductive}, quantifies the capacity of a hypothesis class to correlate with random sign patterns on the given finite set, leading to uniform deviation bounds under random splits. Stability-based approaches~\citep{el2006stable, trans_stab_2008} bound the generalization gap by controlling how much the learned predictor can change when the labeled training subset is perturbed. PAC-Bayesian transductive bounds~\citep{pac2014begin} instead control test performance by combining the training error with a complexity term defined through the KL-divergence between a posterior and a prior over predictors. In addition, there exist transductive bounds based on other complexity measures, such as the VC dimension~\citep{pmlr-v35-tolstikhin14} and Permutational Rademacher Complexity~\citep{permute_RC_2015}. While these results provide general guarantees that can leverage access to unlabeled test features, the resulting complexity terms are often difficult to compute and may not reliably track observed test error. In~\cref{sec:WB-Transductive}, we complement this line by providing representation-based transductive bounds, which yield practically computable estimates that align well with empirical generalization.

\paragraph{GNN-specific transductive analyses} 
Since graph node classification is a canonical transductive problem, many transductive generalization bounds have been specialized to GNNs. These works typically instantiate model-agnostic transductive frameworks and upper bound the resulting complexity terms using architecture- or training-dependent quantities. For example, \citet{Oono2020Graph, EsserNEURIPS2021_learning, tang2023towards} derive GNN-specific guarantees by upper-bounding transductive Rademacher complexity using architecture-dependent quantities, including normalized adjacency matrices, diffusion operators, or optimizers such as SGD.

\citet{congNEURIPS2021_provable} studies the relationship between depth and generalization error for GCN-type models using stability-based theory. While they provide representation-geometry insights, these are not formally linked to their bound, and the resulting depth-dependent prediction can be misaligned with empirical observations. In~\cref{sec:analysis}, we derive a depth-dependent specialization for GNNs from our proposed bounds, yielding, to the best of our knowledge, the first non-monotonic depth-generalization bound driven by competing geometric effects.

\section{Preliminaries}

\paragraph{Generalization bound in transductive learning} 
In transductive learning, the learner has access to the features of both training and test points at training time, while labels are available only for the training points. This contrasts with standard supervised learning, where test features are not accessible during training. 

A formal definition of a distribution-free transductive setting is provided by \citet{vapnik2006estimation}.
Consider a fixed dataset $\mathcal{D}=\{(\rvx_i,y_i)\}_{i=1}^{m+u}$ of $m+u$ data points $\rvx_i\in\mathbb{R}^F$ and labels $y_i\in\{1,\cdots,K\}$, where $K$ denotes the number of classes. Using a random permutation $\pi:\{1,\cdots,m+u\}\rightarrow\{1,\cdots,m+u\}$, the training set is determined as 
$
    \mathcal{D}_{\text{train}} 
    = 
    \{ ({\bf{x}}_i, y_i): i \in \mathcal{I}_{\text{train}}^{(\pi)} \}
$ 
and test set as 
$
    \mathcal{D}_{\text{test}} 
    = 
    \{ ({\bf{x}}_i, y_i): i \in \mathcal{I}_{\text{test}}^{(\pi)} \}
$.
Here, $\gI_{\mathrm{train}}^{(\pi)}\coloneq\{\pi(i)\}_{i=1}^{m}$ and $\gI_{\mathrm{test}}^{(\pi)}\coloneq\{\pi(i)\}_{i=m+1}^{m+u}$ are index sets for training and test sets, respectively.
During training, the learner has access to the full unlabeled samples $\{\rvx_i\}_{i=1}^{m+u}$ and the labels $\{y_i:i\in \mathcal{I}_{\text{train}}^{(\pi)}\}$. 
Building generalization bound in transductive learning focuses on bounding test error $\frac{1}{u}\sum_{i \in \mathcal{I}_{\text{test}}^{(\pi)}}\ell(\rvx_i,y_i)$ for any permutation $\pi$, where $\ell$ is the loss function.

\paragraph{Graph neural networks}
Consider an undirected graph $\mathcal{G}=(\mathcal{V},\mathcal{E})$, where $\mathcal{V}$ denotes a set of $N \in \mathbb{N}$ nodes and $\mathcal{E} \subseteq \mathcal{V} \times \mathcal{V}$ represents the edge set. Each node $i \in \mathcal{V}$ is associated with a $F$- dimensional vector $\mathbf{x}_i \in \mathbb{R}^F$. These representations are collectively represented by the matrix $\mathbf{X} \in \mathbb{R}^{ N \times F} $. The graph structure can be encoded by a binary, symmetric adjacency matrix $\mathbf{A} \in \{0,1\}^{N \times N}$, where $\mathbf{A}_{ij} = 1$ if an edge exists between node $i$ and $j$, and $\mathbf{A}_{ij} = 0$, otherwise.

GNNs are characterized by an aggregation process that leverages edge information to capture interactions between neighboring nodes. 
For example, the aggregation process in Graph Convolutional Networks (GCNs)~\citep{kipf2016semi} can be formalized as $\hat{\rmA}\rmX$, where $\hat{\rmA}\coloneq \tilde{\rmD}^{-1/2}\tilde{\rmA}\tilde{\rmD}^{-1/2}$, $\tilde{\rmA}\coloneq \rmA+\rmI$, and $\tilde{\rmD}\coloneq \mathrm{diag}(\tilde{\rmA}\,\mathbf{1})$. 
The aggregation step updates each node feature as a weighted average of its neighbors’ features, inducing dependencies among node features.
The output of the $\ell$-th layer $\rmX^{(\ell)}$ in GCNs is calculated with additional linear transformation and non-linear activation steps, $\rmX\rmW$ and $\sigma(\cdot)$, given by:
\begin{equation*}
    \rmX^{(\ell+1)} = \sigma(\hat{\rmA}\rmX^{(\ell)}\rmW^{(\ell)})\;,
\end{equation*}
where $\rmX^{(0)}=\rmX$ and $0\leq\ell<L$.
In the case of Simple Graph Convolution (SGC)~\citep{wu2019simplifying}, the non-linear activation in GCNs is removed, resulting in:
\begin{equation*}
    \rmX^{(L)}=\hat{\rmA}^L\rmX^{(0)}\rmW\;.
\end{equation*}

\paragraph{Wasserstein distance}
Given two probability distributions $\mu$ and $\nu$ in $\mathbb{R}^F$, the $p$-Wasserstein distance between $\mu$ and $\nu$ with Euclidean cost is defined as:
\begin{equation*}
    \mathcal{W}_p(\mu, \nu):=\inf _{T \in \mathcal U(\mu, \nu)}\left(\mathbb{E}_{(x, y) \sim {T}}\|x-y\|^p\right)^{1 / p},
\end{equation*}
where ${\mathcal U}(\mu, \nu)$ denotes the set of all couplings of $\mu$ and $\nu$, i.e., joint distributions ${T}$ on $\mathbb{R}^F \times \mathbb{R}^F$ with $\mu$ and $\nu$ as marginals. Intuitively, the Wasserstein distance measures the minimal cost of transporting the mass from the distribution $\mu$ to $\nu$. Throughout this work, unless stated otherwise, $\gW(\cdot,\cdot)$ denotes the $1$-Wasserstein distance and $\|\cdot\|$ the Euclidean norm.

\section{Wasserstein bounds in transductive learning}\label{sec:WB-Transductive}
In this section, given an encoder $\phi$, we derive two generalization error bounds in the transductive setting. We first present an error bound that depends on a given train-test split $\pi$ in~\cref{thm:global-ot}, and then establish the high-probability bound on the generalization gap in~\cref{thm:classwise-ot}.

\subsection{Setup}
We adapt the inductive setup of \citet{chuang2021measuring} to the task of transductive learning. 
Let $\gX$ denote the input space, $\gZ$ the embedding space, and $\gY=\{1,\cdots,K\}$ the output space.
Consider a compositional hypothesis class $\gF\circ\Phi$, 
with feature encoder $\Phi =\{\phi:\gX\rightarrow\gZ\}$ and score-based classifier $\gF=\{f=[f_1,\cdots,f_K]:\gZ\rightarrow\mathbb{R}^K\}$. 
The label of a data point $\rvx\in\gX$ is predicted by $\argmax_yf_{y\in\gY}(\phi(\rvx))$. 
The margin of classifier $f$ for a data point $(x,y)$ is defined by:
\begin{equation}
    \rho_f(\phi(\rvx),y)\coloneq f_{y}(\phi(\rvx))-\max_{y'\neq y} f_{y'}(\phi(\rvx))\;,
\end{equation}
where the $x$ is misclassified if $\rho_f(\phi(\rvx),y)\leq0$.

Building a margin-based generalization bound in transductive learning focuses on bounding the gap between the zero-one loss of the test set 
\begin{equation}
    R_u(f\circ\phi;\pi)\coloneq \frac{1}{u}\sum_{i\in\testi}
    \mathbbm{1}_{\rho_f(\phi(\rvx_i),y_i)\leq0}\;,
\end{equation}
and the $\gamma$-margin loss of train set $R_{m,\gamma}(f\circ\phi;\pi)$
\begin{equation}
    R_{m,\gamma}(f\circ\phi;\pi)\coloneqq \frac{1}{m}\sum_{i\in\traini}
    \mathbbm{1}_{\rho_f(\phi(\rvx_i),y_i)\leq\gamma}\;,
\end{equation}
where $\gamma>0$ given permutation $\pi$.

\subsection{Theoretical analysis}
We derive two transductive generalization bounds based on optimal transport in terms of the Wasserstein distance: one involving the distance between the encoded feature distributions of training and test sets, and the other involving the expected sum of Wasserstein distances between feature distributions within the same class. The first theorem allows direct computation of the error bound. The second theorem explains how the concentration and separation of learned features, represented through class-conditional distributions, influence the generalization gap in the transductive setting.

To formalize the first theorem, we define the empirical distribution of representation $\mu_{\gI}$ for given index set $\gI$ as $\mu_{\gI}\coloneq \frac{1}{|\gI|}\sum_{i\in\gI}\delta_{\rvx_i}$, where $\delta_x$ denotes the Dirac delta function. The distribution $\phi_\#\mu$ is the result of applying the pushforward measure operation on $\mu$ with respect to $\phi(\cdot)$, i.e., the distribution of $\phi(x)$ when $x$ is drawn from $\mu$. Our first main result is as follows: 

\begin{restatable}[Global bound in the transductive setting]{theorem}{globalbound} 
\label{thm:global-ot} 
    Let $\gamma>0$. For any random split $\pi$, and all $f\circ\phi\in F\circ\Phi$, 
        \begin{align}\label{eq:main-global-ot}
            R_u & (f\circ\phi;\pi) - R_{m,\gamma}(f\circ\phi;\pi) \nonumber \\ 
            & ~ \le 
            \frac{M(f,\phi)}{\gamma}\,\mathcal{W}\big(\phi_\#\mu_{\gI_{\rm{train}}^{(\pi)}},\,\phi_\#\mu_{\gI_{\rm{test}}^{(\pi)}}\big)\;, 
        \end{align} 
    where $$
        M(f,\phi)
        \coloneqq
        \max_{i,j,y}
        \frac{
            |\rho_f(\phi(\mathbf x_i),y_i) - \rho_f(\phi(\mathbf x_j),y)|
        }{
            \|\phi(\mathbf x_i)-\phi(\mathbf x_j)\|
        }
    $$ for $
        i \in \traini, ~ j \in \testi, ~\text{and}~ y \in \gY.
    $
\end{restatable}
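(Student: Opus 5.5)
The proof will be a transport argument between the two empirical measures. Set $z_i\coloneqq\phi(\rvx_i)$. By definition, $\phi_\#\mu_{\traini}$ puts mass $1/m$ on each $z_i$ with $i\in\traini$, and $\phi_\#\mu_{\testi}$ puts mass $1/u$ on each $z_j$ with $j\in\testi$.

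\textbf{Step 1 (coupling).} Fix an optimal coupling $T$ of these two discrete measures, with weights $T_{ij}\ge 0$. Its marginals satisfy $\sum_{j}T_{ij}=1/m$ and $\sum_i T_{ij}=1/u$. Optimality gives
\[
\sum_{i,j}T_{ij}\|z_i-z_j\|=\gW\big(\phi_\#\mu_{\traini},\phi_\#\mu_{\testi}\big).
\]
Such a $T$ exists because both measures are finitely supported, so the problem is a finite linear program. Using the marginal identities, I rewrite the gap as
\[
R_u-R_{m,\gamma}=\sum_{i\in\traini,\,j\in\testi}T_{ij}\Big(\mathbbm 1_{\rho_f(z_j,y_j)\le0}-\mathbbm 1_{\rho_f(z_i,y_i)\le\gamma}\Big).
\]

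\textbf{Step 2 (pointwise bound).} For each pair $(i,j)$ I will show
\[
\mathbbm 1_{\rho_f(z_j,y_j)\le0}-\mathbbm 1_{\rho_f(z_i,y_i)\le\gamma}\le \frac{M(f,\phi)}{\gamma}\|z_i-z_j\|.
\]
The left side is at most $1$. It is positive only when $\rho_f(z_j,y_j)\le 0$ and $\rho_f(z_i,y_i)>\gamma$. In that case
\[
\gamma< \rho_f(z_i,y_i)-\rho_f(z_j,y_j)\le M(f,\phi)\,\|z_i-z_j\|,
\]
where the last inequality is the definition of $M$ with the choice $y=y_j\in\gY$. Hence $1<M\|z_i-z_j\|/\gamma$, which proves the pointwise inequality.

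If $z_i=z_j$, the ratio defining $M$ is undefined. I will adopt the convention that $M=\infty$ whenever the numerator is nonzero, and restrict the maximum to pairs with $z_i\ne z_j$ otherwise. When $z_i=z_j$ and $y=y_j$, the margin values coincide (the labels are the same because $i$ and $j$ are in the same situation only if $y_i=y_j$). Handling this degenerate case is the only delicate point, and I will state it as a remark.

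\textbf{Step 3 (sum).} Multiplying the pointwise bound by $T_{ij}\ge0$ and summing over all pairs gives
\[
R_u-R_{m,\gamma}\le \frac{M}{\gamma}\sum_{i,j}T_{ij}\|z_i-z_j\| = \frac{M}{\gamma}\,\gW\big(\phi_\#\mu_{\traini},\phi_\#\mu_{\testi}\big),
\]
which is \eqref{eq:main-global-ot}.

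The argument uses no randomness of $\pi$. It holds for every split and every $f\circ\phi$ simultaneously, which matches the deterministic phrasing of \cref{thm:global-ot}. The main obstacle is only bookkeeping: the identity in Step 1 must be written with the correct marginals $1/m$ and $1/u$ when $m\ne u$, so I will use a general coupling matrix rather than a permutation.
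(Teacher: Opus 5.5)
Your proof is correct. Its skeleton matches the paper's: couple the two empirical measures, bound the loss difference on each transported pair by $\frac{M(f,\phi)}{\gamma}\|z_i-z_j\|$, and integrate against an optimal plan.

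The difference is in the pointwise step. The paper first replaces both risks by a single ramp loss $L_\gamma$, using $\mathbbm 1_{u\le 0}\le L_\gamma(u)\le \mathbbm 1_{u\le\gamma}$, and then uses that $L_\gamma$ is $\frac1\gamma$-Lipschitz. You instead compare the two indicators, with their different thresholds, directly, and observe that a violation on a transported pair forces the margin to drop by more than $\gamma$. Your version is more elementary, since it needs no surrogate and no Lipschitz argument. The paper's version is the standard margin-bound template of \citet{chuang2021measuring}, and it is reused verbatim in the class-wise proof, where the same Lipschitz loss $\ell_{\gamma,f}$ appears on both sides of $\Delta^{(\pi)}(f)$.

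One bookkeeping point. A coupling of the pushforward measures lives on atoms of $\gZ$, and one atom can carry several indices. To write the plan as $(T_{ij})$, couple $\mu_{\traini}$ and $\mu_{\testi}$ directly under the cost $\|z_i-z_j\|$. The optimal value is unchanged, because an atom-level plan lifts to indices by splitting each atom's mass evenly among its preimages. The paper makes the same identification implicitly.

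Your handling of the degenerate case needs correcting. The parenthetical claim is false: distinct nodes can have $z_i=z_j$ but $y_i\neq y_j$, and then $\rho_f(z_i,y_i)\neq\rho_f(z_j,y_j)$ in general.

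What is true is this. If $M(f,\phi)<\infty$ under your convention, the numerator vanishes at every colliding train--test pair for every $y$, in particular for $y=y_j$. The pointwise inequality then reads $0\le 0$ at those pairs.

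If $M(f,\phi)=\infty$, you must also read $\frac{M}{\gamma}\cdot 0$ as $+\infty$. Otherwise the statement fails. Take $K=2$ and let all embeddings collapse to one point $z$, with every training label $1$, every test label $2$, and $\rho_f(z,1)=2\gamma$, $\rho_f(z,2)=-2\gamma$. Then $R_u-R_{m,\gamma}=1$ while $\gW=0$.

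The paper's proof has the same issue: it excludes $\rvx=\rvx'$, not $\phi(\rvx)=\phi(\rvx')$. So this is a caveat on the statement rather than a defect of your approach.
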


\cref{thm:global-ot} demonstrates that the generalization error is small under three conditions: 1) the distance between the feature distributions of the training and test sets $\mathcal{W}(\cdot ,\cdot)$ is small, 2) the change rate of the margin of classifier $f$, denoted as $M(f,\phi)$, is small, or 3) the margin of the classifier $\gamma$ is large. 
Since we can access the encoded feature of both the test as well as training in the transductive setting, we can readily obtain the generalization error bound in \cref{eq:main-global-ot} by computing the Wasserstein distance between the two distribution and all possible values of $|\rho_f(\phi(\mathbf x_i),y_i) - \rho_f(\phi(\mathbf x_j),y)|/\lVert\phi(\mathbf x_i)-\phi(\mathbf x_j)\rVert$ where $i\in\traini$, $j\in\testi$ and $y\in\gY$. The proof of \cref{thm:global-ot} is provided in \cref{apdx_sec:proof of Global}. 

We now introduce our second bound, which connects generalization to the class-wise feature distributions. To formalize our theorem, we define $\mathcal{I}_{\mathrm{train},c}^{(\pi)}\coloneq\{i\in\traini|y_i=c\}$ and $\mathcal{I}_{\mathrm{test},c}^{(\pi)}\coloneq\{i\in\testi|y_i=c\}$ for each class $c$. let $m_c^{(\pi)} \coloneqq | \mathcal{I}_{\mathrm{train},c}^{(\pi)}|$ and $u_c^{(\pi)} \coloneqq | \mathcal{I}_{\mathrm{test},c}^{(\pi)}|$ denoting the number of training and test samples with label $c$, respectively. We represent the second main theorem on generalization error with high probability as follows:

\begin{restatable}[Class-wise bound in the transductive setting]{theorem}{classwisebound}
\label{thm:classwise-ot}
  Let $\gamma>0$. Then, with probability at least $1-\delta$ over the random split $\pi$, for all $f\circ\phi\in F\circ\Phi$,
\begin{align}
\label{eq:main-classwise-ot}
&R_u(f\circ\phi;\pi) - R_{m,\gamma}(f\circ\phi;\pi) \notag\\
&
\le 
\sum_{c=1}^K
\frac{M_c(f,\phi)}{\gamma}
\mathbb E_{\pi'}\!\left[ 
    \frac{m_c^{(\pi')}}{m}
    \mathcal{W}\!\left(\phi_\#\mu_{\gI_{\rm{train},c}^{(\pi')}},\phi_\#\mu_{\gI_{\rm{test},c}^{(\pi')}}\right)
\right] \notag\\
&
\qquad+
\mathbb E_{\pi'}\left[
    \,\sum_{c=1}^K\biggl|\frac{u_c^{(\pi')}}{u}-\frac{m_c^{(\pi')}}{m}\biggl|\,
\right]
+ 
\varepsilon_\delta,
\end{align}

where $$
    M_c(f,\phi) 
    \coloneq 
    \max_{i, j}~\frac{
        |\rho_f(\phi(\mathbf x_i),c) - \rho_f(\phi(\mathbf x_j),c)|
    }{
        \|\phi(\mathbf x_i)-\phi(\mathbf x_j)\|
    }
$$ for $i \neq j$ such that $
    i, j \in \gI_{\mathrm{train},c}^{(\pi)} \cup \testi.
$ 
In addition, $
    \varepsilon_\delta 
    = 
    \sqrt{\frac{m\,u\,\beta^2}{2\,(m+u-\tfrac12)} \left(1-\frac{1}{2\max\{m,u\}}\right)^{-1} \ln\frac{1}{\delta}}
$ is a model-agnostic term with $\beta = \frac{1}{m} + \frac{1}{u}$.
\end{restatable}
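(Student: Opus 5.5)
Our approach has two parts. First, we bound the gap for a fixed split by a class-wise transport quantity plus a label-proportion mismatch term. Second, we concentrate this split-dependent quantity around its expectation over $\pi'$ using a concentration inequality for sampling without replacement. The constant in $\varepsilon_\delta$ (the factor $\frac{mu}{m+u-1/2}(1-\frac{1}{2\max\{m,u\}})^{-1}$) is exactly that of the transductive McDiarmid-type inequality of \citet{el2009transductive}. We will therefore invoke that inequality for a function of the permutation whose value changes by at most $\beta=\frac1m+\frac1u$ when one training index is swapped with one test index.

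\textbf{Step 1 (deterministic class-wise bound).} Fix $\pi$ and write $g_c(z)=\mathbbm{1}_{\rho_f(z,c)\le 0}$ and $h_c(z)=\mathbbm{1}_{\rho_f(z,c)\le\gamma}$. Decompose
\[
R_u-R_{m,\gamma}=\sum_c\Bigl(\tfrac{u_c}{u}\E_{\phi_\#\mu_{\mathrm{test},c}}g_c-\tfrac{m_c}{m}\E_{\phi_\#\mu_{\mathrm{train},c}}h_c\Bigr).
\]
Replacing $\frac{u_c}{u}$ by $\frac{m_c}{m}$ in the test term costs at most $\sum_c|\frac{u_c}{u}-\frac{m_c}{m}|$, since $g_c\in[0,1]$. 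For the remaining term $\frac{m_c}{m}(\E_{\mathrm{test},c}g_c-\E_{\mathrm{train},c}h_c)$, we use the ramp function $r_\gamma(t)=\min(1,\max(0,1-t/\gamma))$, which is sandwiched as $\mathbbm{1}_{t\le0}\le r_\gamma(t)\le \mathbbm{1}_{t\le\gamma}$ and is $\frac1\gamma$-Lipschitz. Composed with $\rho_f(\cdot,c)$, it is $\frac{M_c}{\gamma}$-Lipschitz on the class-$c$ points (the Lipschitz constant is only needed on the supports, which the definition of $M_c$ covers). Kantorovich–Rubinstein duality, exactly as in the proof of \cref{thm:global-ot}, then yields $\frac{M_c}{\gamma}\mathcal W(\phi_\#\mu_{\mathrm{train},c},\phi_\#\mu_{\mathrm{test},c})$. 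Empty classes contribute nothing. This gives
\[
R_u-R_{m,\gamma}\le \Psi(\pi):=\sum_c \tfrac{M_c}{\gamma}\tfrac{m_c}{m}\mathcal W_c(\pi)+\sum_c\Bigl|\tfrac{u_c}{u}-\tfrac{m_c}{m}\Bigr|.
\]
To handle the technicality that $M_c$ depends on $\pi$, we would instead use a uniform Lipschitz constant (taken over all points) inside $\Psi$ for the concentration step. Alternatively, we can note that the statement's $M_c$ is evaluated at the realized $\pi$ and carry the argument with the sup.

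\textbf{Step 2 (bounded differences).} We show that swapping one train index $i$ with one test index $j$ changes $\Psi$ by at most $\beta$, suitably normalized. This is the main obstacle, because the Wasserstein terms are not bounded a priori. What we would try first is to avoid concentrating $\Psi$ directly and instead apply the concentration to the original quantity $\Delta(\pi)=\sup_{f,\phi}$-free $R_u-R_{m,\gamma}$ for the fixed $f\circ\phi$. Its swap sensitivity is clearly at most $\frac1m+\frac1u=\beta$, since each loss lies in $[0,1]$. The inequality of \citet{el2009transductive} then gives $\Delta(\pi)\le\E_{\pi'}\Delta(\pi')+\varepsilon_\delta$ with probability $1-\delta$. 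Applying Step 1 to each $\pi'$ gives $\E_{\pi'}\Delta(\pi')\le\E_{\pi'}\Psi(\pi')$, where the Lipschitz constants are pulled out of the expectation, producing the stated expression.

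The remaining delicate point is the ``for all $f\circ\phi$'' quantifier. A pointwise high-probability statement does not automatically become uniform over the class. We would handle this by noting that the bound is intended for the learned $(f,\phi)$ in the transductive sense, where all features are available before the split. Failing that, we would concentrate the random part of the sup, $\sup(\Delta-\E\Delta)$, whose swap sensitivity is still $\beta$. This yields the same $\varepsilon_\delta$ together with an expected-sup term, which we would then need to absorb; verifying that absorption is where we expect the most care to be required.
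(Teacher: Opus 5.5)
\textbf{Gap: the ``for all $f\circ\phi$'' quantifier is not established.} Your Step 1 and your choice of concentration tool are sound, but neither remedy you suggest delivers uniformity. Concentrating $\Delta(\pi)=R_u-R_{m,\gamma}$ for a fixed $f\circ\phi$ only bounds predictors chosen independently of the split. Having all features before the split does not make the learned predictor split-independent: $f$ is trained on the labels $\{y_i: i\in\traini\}$, so it is a function of $\pi$, and a fixed-$f$ inequality says nothing about it. Your centered fallback $\sup_f(\Delta-\E\Delta)$ fails for a concrete reason. You use the $0$--$1$ loss on the test side and the $\gamma$-margin loss on the train side, so $-\E_{\pi'}\Delta^{(\pi')}(f)$ equals the fraction of all points whose margin lies in $(0,\gamma]$. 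That term sits inside the supremum, Step 1 does not control it, and for a rich class it can be of order one.

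\textbf{Missing idea (the paper's route).} Apply Lemma~\ref{apdx_lem:trans-mcdiarmid} to the uncentered supremum $g(\pi)=\sup_{f\in\gF}\Delta^{(\pi)}(f)$. Here $\Delta^{(\pi)}(f)$ is the test-minus-train average of the \emph{same} ramp loss $L_\gamma(\rho_f(\phi(\mathbf x_i),y_i))$ on both sides. For each fixed $f$ this has mean zero over $\pi'$, so your centering becomes vacuous. The function $g$ depends only on the unordered split. Each $\Delta^{(\pi)}(f)$ moves by at most $\frac1m+\frac1u$ under a train--test swap, so the supremum does too. This gives $\sup_f\Delta^{(\pi)}(f)\le\E_{\pi'}\sup_f\Delta^{(\pi')}(f)+\varepsilon_\delta$ simultaneously for all $f$.

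\textbf{How the expected supremum is absorbed.} Apply your Step 1 inside the sup. For each $\pi'$, the class-wise decomposition and the Kantorovich--Rubinstein step hold for every $f$ at once, and the Wasserstein distances do not depend on $f$. So the Wasserstein term is what absorbs the supremum, and no Rademacher-type term appears.

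\textbf{Price of this route.} Done carefully, it yields a Lipschitz factor uniform over $\gF$ with $\phi$ held fixed, e.g.\ $\sup_{f\in\gF}M_c(f,\phi)$. The paper's display bounding $\sup_f\Delta_c^{(\pi')}(f)$ writes $M_c(f,\phi)$ there instead. Your fixed-$f$ argument does recover the $f$-specific constant, but only for split-independent $f$.

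\textbf{Minor point.} The $\pi$-dependence of $M_c$ that you flag is harmless. The set $\gI_{\mathrm{train},c}^{(\pi)}\cup\testi$ already contains every class-$c$ index. Hence $M_c$ dominates the Lipschitz constant of $\rho_f(\cdot,c)$ on all class-$c$ embeddings, which is all Step 1 needs for any $\pi'$. No constant taken over all points is required.
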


We provide a proof of \cref{thm:classwise-ot} in \cref{apdx_sec:proof of class-wise OT}. \cref{thm:classwise-ot} identifies four explicit conditions that each reduce the transductive generalization gap: 1) small expected Wasserstein distance between the training and test feature distributions within each class $c$, 2)  the rate of change of the margin of classifier $f$ within each class $c$, (i.e., $M_c(f,\phi)$) is small; 3) margin of the classifier $\gamma$ is large; or 4) the expected sum of difference in class proportions between the training and test sets across all classes is small.

A distinguishing aspect of~\cref{thm:classwise-ot} compared to~\cref{thm:global-ot} is the presence of an expectation over random splits~$\pi'$ in the first term of~\cref{eq:main-classwise-ot}.
Because each random split reassigns training and test indices, the expected sum of intra-class Wasserstein distances over random splits can be interpreted as measuring the Wasserstein distance between arbitrary subsets of features within the same class across the entire dataset, rather than the distance between a fixed training and test set.
Therefore, a smaller expected value indicates that, for each class $c$, the features become more concentrated under the encoder~$\phi$, thereby reducing the class-wise contribution to the generalization gap.

In addition to \emph{intra-class concentration}, separation between different classes can be measured by the Wasserstein distance, $\gW(\phi_\#\mu_{\gI_c}, \phi_\#\mu_{\gI_{c'}})$ for $c \neq c'$.
\begin{remark}[cf.\ Lemma~10 in~\citet{chuang2021measuring}; Proposition~5.2 in~\citet{li2025towards}]
Under the same setup as Theorem~\ref{thm:classwise-ot}, assume additionally that the class-wise classifiers $f_c$ in $f$ are Lipschitz and that $\rho_f(\phi(x_i), y_i) \geq \gamma$ for all $i \in [m]$. Then the Wasserstein complexity term in Theorem~\ref{thm:classwise-ot} is lower bounded by
\begin{align}
\frac{\mathrm{Lip}(f) \cdot \sum_{c=1}^K \left[ M_c(f, \phi) \cdot \mathbb{E}_{\pi'} \left[  \frac{m_c^{(\pi')}}{m}\; \mathcal{W}^{\phi}_{\pi'} \right] \right]}{ \min_{c \in \mathcal{Y}} \mathrm{Lip}(f_c)  \min_{c \neq c'} \mathcal{W} \left( \phi_\# \mu_{I_c}, \phi_\# \mu_{I_{c'}} \right)},
\end{align}
where $\mathcal{W}^{\phi}_{\pi'} = \mathcal{W} \left( \phi_\# \mu_{I_{train,c}^{(\pi')}}, \phi_\# \mu_{I_{test,c}^{(\pi')}} \right)$.
\end{remark}
The remark implies the fundamental role of the concentration-separation trade-off: the bound decreases when intra-class features concentrate and increases when inter-class separation diminishes. This aligns with the intuition that good generalization requires both strong intra-class concentration and large inter-class separation. This perspective will be further utilized in our depth-dependent analysis in~\cref{sec:analysis}.

Our bounds also offer practical advantages with $M_c(f,\phi)$. While inductive bounds rely on the class-wise margin Lipschitz constant $\Lip(\rho_f(\cdot,c))$, our bound instead uses $M_c(f,\phi)$ by exploiting access to unlabeled test features. 
Since $M_c(f,\phi)\leq \Lip(\rho_f(\cdot,c))$, the resulting bound is tighter. 
Moreover, $M_c(f,\phi)$ is exactly computable for any classifier, including the ReLU network, whereas $\Lip(\rho_f(\cdot,c))$ is NP-hard to compute and must be approximated.

\section{Experiments}
\label{sec:exp}
We conduct experiments to evaluate how well our bounds capture empirical generalization error. 
We focus on GNN-based node classification, a representative transductive task in which the interdependence of representations is induced via message passing. We report rank correlations between our bounds and the empirical generalization gap. High positive correlations mean that empirical results support our bounds.



\subsection{Datasets and experimental setup}
\paragraph{Datasets and models} 
We validate our theory using nine datasets, comprising five homophilic and four heterophilic graphs.
The homophilic datasets include Cora, CiteSeer, PubMed, Computers, and Photo~\citep{sen2008collective, yang2016revisiting,mcauley2015image}. For heterophilic datasets, we use Squirrel, Chameleon, Roman-empire, and Amazon-ratings~\citep{platonov2023critical}. Following the methodology of \citet{platonov2023critical}, we applied a filtering process to both Chameleon and Squirrel to prevent train-test leakage. The key statistics for these datasets are summarized in \cref{apdx:datastat}. For models, we use five GNN architectures, SGC~\citep{wu2019simplifying}, Graph Convolutional Network (GCN)~\citep{kipf2016semi}, GCNII~\citep{chen2020simple}, Graph Attention Network (GAT)~\citep{velivckovic2017graph}, and GraphSAGE~\citep{hamilton2017inductive}

\paragraph{Implementation details}
We follow the standard transductive learning setting~\citep{tang2023towards}, where for each run, we randomly select $30\%$ of the nodes for the training set, and use the remaining $70\%$ as the test set. We use GNN models for the encoder $\phi$, varying the network depth over $\{1,2,4,8,16,32\}$ layers. For classifier $f$, we use one-, two-, and four-layer MLPs with ReLU activation functions.
All models are trained for $500$ iterations using the Adam optimizer. We set the hidden dimension to $64$ and the learning rate to $0.01$. For the global bound in~\cref{thm:global-ot}, we report \emph{Global}. We use the $0.9$ percentile of the change rate among all combination sets of $(i, j, y)$ for $M(f,\phi)$, which improves numerical robustness and avoids sensitivity to rare extreme pairs. Results with other percentiles are provided in~\cref{apdx:full_results}. For the class-wise bound in~\cref{thm:classwise-ot}, we report two variants, \emph{Class-wise} and \emph{Class-wise approx}. \emph{Class-wise} validates the theoretical foundation of \cref{thm:classwise-ot}, while \emph{Class-wise approx} estimates the bound using only training data to avoid test label leakage in real training scenarios. Details of the approximation are given in \cref{apdx_sec:approximate}.

\begin{figure*}[t]
    \centering
    \includegraphics[width=\textwidth]{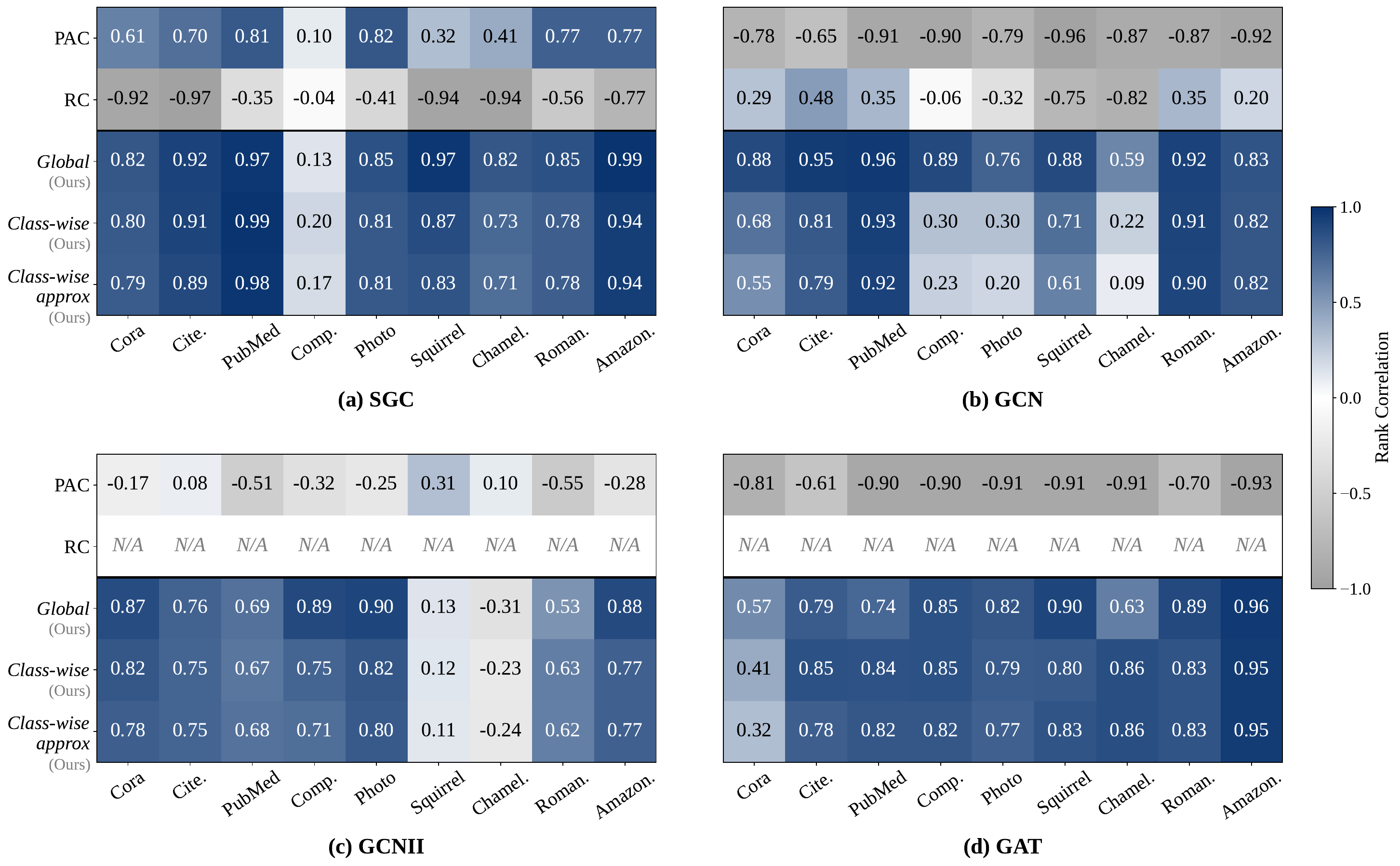}
    \caption{Rank correlation between generalization bounds and empirical error gap across nine datasets and four GNN architectures. \emph{Global} reports our bound from \cref{thm:global-ot}. \emph{Class-wise} and {\emph{Class-wise approx}} correspond to \cref{thm:classwise-ot} with and without test labels, respectively. Darker blue indicates a stronger positive correlation. Our bounds consistently achieve high correlations, while PAC and RC bounds show weak or negative correlations in most cases. N/A indicates the bound cannot be computed.}
    \label{fig:main_result}
\end{figure*}

\paragraph{Baselines}
For the baseline, we use two transductive generalization error bounds: the PAC-Bayesian bound~\citep{pac2014begin} and a transductive Rademacher complexity (RC) bound~\citep{el2009transductive, EsserNEURIPS2021_learning}. 
While the original transductive RC bound~\citep{el2009transductive} is not computable in general, we adopt \citet{EsserNEURIPS2021_learning} and use their computable upper bound tailored to GNN components. Since the RC upper bound is derived for hypothesis classes corresponding to GCN and SGC, we restrict RC comparisons to two models. Several other transductive bounds based on complexity measures such as the VC dimension~\citep{pmlr-v35-tolstikhin14} and Permutational Rademacher Complexity~\citep{permute_RC_2015}, and stability-based bounds~\citep{el2006stable, trans_stab_2008} are not included as baselines, as they are computationally intractable in practice.

\subsection{Results}

\cref{fig:main_result} visualizes the rank correlation between each theoretical bound and the empirical generalization gap as a heatmap across nine datasets and four GNN architectures. \emph{Global} from~\cref{thm:global-ot} and the two variants from~\cref{thm:classwise-ot}, \emph{Class-wise} (with test labels) and \emph{Class-wise approx} (without test labels), show consistently strong alignment with the empirical gap, providing empirical support for \cref{thm:global-ot,thm:classwise-ot}. The \emph{Class-wise approx} variant also attains high correlation, suggesting that the class-wise bound remains effective in realistic training scenarios without test-label leakage. The PAC baseline performs well for SGC, but in other cases, both the RC and PAC baselines fail to track the empirical gap across most datasets and model architectures, indicating that these classical baselines are not reliable generalization predictors for graph node classification. The RC bound is computable only for SGC and GCN, so we report it only for these two architectures. The result of the GraphSAGE model is provided in~\cref{apdx:Full_result}.

\cref{fig:scatter_plot_rank} provides an example visualization for SGC on the Squirrel dataset, illustrating the rank relationship between empirical generalization error and the PAC baseline versus our proposed bound.

\section{In-Depth Case Study: Node Classification Bound in Graph Neural Networks}\label{sec:analysis}

In this section, we apply our bounds to analyze how the depth of GNN encoders influences generalization in node classification. First, we derive depth-dependent upper bounds on the Wasserstein distance terms. We then empirically demonstrate that depth introduces a fundamental trade-off in generalization. This leads to a non-monotonic relationship between depth and generalization error, which prior GNN generalization bounds do not capture. Finally, we discuss connections to prior oversmoothing work and future directions for performance improvement.

\subsection{Depth-dependent Wasserstein bounds for GNNs}
\label{sec:depth-wasserstein-gnn}
We further characterize the Wasserstein distance for SGC and GCN between encoded features produced by $\ell$ steps of message passing to study how aggregation steps reshape representation geometry between arbitrary node subsets. 

To formalize propositions, let $\tilde{\rmD}$ be the diagonal degree matrix after adding self-loops to the graph, so that $\tilde d_i\coloneq(\tilde{\rmD})_{ii}$ denotes the resulting degree of node $i$. Define the degree statistic $d(x_i)\coloneq\sqrt{\tilde d_i}$, and write $d_\#\mu_{\mathcal S}$ for the pushforward of $\mu_{\mathcal S}$ under $d$.

Let $\{\lambda_k(\hat{\rmA})\}_{k=1}^N$ be the eigenvalues of $\hat{\rmA}$ in descending order with the largest $\lambda_1(\hat{\rmA})=1$ and define $\rho_\perp(\hat{\rmA})\coloneq\max_{k\ge2}|\lambda_k(\hat{\rmA})|$. Note that $\rho_\perp(\hat{\rmA})<1$. 

\begin{restatable}{proposition}{sgcdepth} 
\label{prop:w1-sgc}
Consider the SGC encoder for node $i$ at depth $\ell$ defined by
$
\phi^{(\ell)}(x_i;\rmX,\rmA):=(\hat{\rmA}^{\ell}\rmX)_{i\cdot}.
$
Then for any nonempty $\mathcal S,\mathcal T\subseteq[N]$ and any $\ell\in\mathbb N$,
\begin{align}
\label{eq:w1-sgc}
& {\mathcal W} \big(\phi^{(\ell)}_\#\mu_{\mathcal S},\,\phi^{(\ell)}_\#\mu_{\mathcal T}\big) 
\\
&\quad\le
C_1~
\mathcal W\big(d_\#\mu_{\mathcal S},\,d_\#\mu_{\mathcal T}\big)
+
C_2~\rho_\perp(\hat{\rmA})^{\ell}, \notag
\end{align}
where $C_1$ and $C_2$ are finite constants depending only on $\rmX$ and $\hat{\rmA}$.
\end{restatable}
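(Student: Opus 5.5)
We plan to use the spectral decomposition of $\hat{\rmA}$ to split $\hat{\rmA}^{\ell}\rmX$ into a stationary rank-one part, which depends on node $i$ only through its degree, and a residual that decays geometrically. Since $\hat{\rmA}=\tilde{\rmD}^{-1/2}\tilde{\rmA}\tilde{\rmD}^{-1/2}$ is symmetric, it has an orthonormal eigenbasis $\{v_k\}$. The top eigenvector is $v_1=\tilde{\rmD}^{1/2}\mathbf 1/\sqrt{\sum_j\tilde d_j}$, so $(v_1)_i=d(x_i)/\sqrt{\operatorname{tr}\tilde{\rmD}}$. (Here we assume the graph is connected and non-bipartite; with self-loops the latter is automatic, and this is what makes $\rho_\perp<1$.) We therefore write
\begin{equation*}
\hat{\rmA}^{\ell}\rmX = v_1 v_1^\top \rmX + \sum_{k\ge2}\lambda_k^{\ell} v_k v_k^\top \rmX \eqqcolon P + E^{(\ell)}.
\end{equation*}
Row $i$ of $P$ equals $d(x_i)\,a$ with the fixed vector $a\coloneqq \rmX^\top \tilde{\rmD}^{1/2}\mathbf 1/\operatorname{tr}\tilde{\rmD}\in\mathbb R^F$. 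Row $i$ of $E^{(\ell)}$ satisfies $\|E^{(\ell)}_{i\cdot}\|\le\|E^{(\ell)}\|_{2}\le \rho_\perp(\hat{\rmA})^{\ell}\|\rmX\|_2$, because $\|E^{(\ell)}\|_2=\|(\hat{\rmA}^{\ell}-v_1v_1^\top)\rmX\|_2$ and the operator norm of $\hat{\rmA}^{\ell}-v_1v_1^\top$ is $\rho_\perp^{\ell}$.

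Next we compare the Wasserstein distance for $\phi^{(\ell)}$ with that for the map $\psi(x_i)\coloneqq d(x_i)\,a$, using the triangle inequality:
\begin{equation*}
\mathcal W(\phi^{(\ell)}_\#\mu_{\mathcal S},\phi^{(\ell)}_\#\mu_{\mathcal T})\le \mathcal W(\phi^{(\ell)}_\#\mu_{\mathcal S},\psi_\#\mu_{\mathcal S})+\mathcal W(\psi_\#\mu_{\mathcal S},\psi_\#\mu_{\mathcal T})+\mathcal W(\psi_\#\mu_{\mathcal T},\phi^{(\ell)}_\#\mu_{\mathcal T}).
\end{equation*}
The two outer terms are bounded by the identity coupling $i\mapsto i$. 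Each is at most $\max_i\|E^{(\ell)}_{i\cdot}\|\le\rho_\perp^{\ell}\|\rmX\|_2$, which gives $C_2=2\|\rmX\|_2$. For the middle term, $\psi=g\circ d$ where $g(t)=t\,a$ is $\|a\|$-Lipschitz from $\mathbb R$ to $\mathbb R^F$. Pushing an optimal coupling of $d_\#\mu_{\mathcal S}$ and $d_\#\mu_{\mathcal T}$ forward through $g\times g$ gives $\mathcal W(\psi_\#\mu_{\mathcal S},\psi_\#\mu_{\mathcal T})\le\|a\|\,\mathcal W(d_\#\mu_{\mathcal S},d_\#\mu_{\mathcal T})$, so $C_1=\|a\|$. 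Both constants depend only on $\rmX$ and $\hat{\rmA}$; note $\tilde{\rmD}$ is determined by $\hat{\rmA}$ up to the graph structure, which is fixed.

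The main obstacle is the spectral step. We must verify that $v_1$ really has entries proportional to $\sqrt{\tilde d_i}$, by checking $\hat{\rmA}\tilde{\rmD}^{1/2}\mathbf 1=\tilde{\rmD}^{-1/2}\tilde{\rmA}\mathbf 1=\tilde{\rmD}^{1/2}\mathbf 1$. We must also justify the operator-norm bound on the orthogonal complement. If the graph is disconnected, the eigenvalue $1$ has multiplicity greater than one, and $\rho_\perp<1$ fails unless it is interpreted per connected component. In that case we would run the same argument componentwise: $P$ becomes a block sum of rank-one projections, and $C_1$ becomes the maximum of the $\|a_r\|$ over components $r$. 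This handles the problem only after enlarging the degree statistic to record the component label, so we will state connectivity as the working assumption, consistent with the paper's note that $\rho_\perp<1$.
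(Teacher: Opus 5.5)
Your proof is correct and follows essentially the paper's route. You use the same split $\hat{\rmA}^{\ell}\rmX=P+E^{(\ell)}$; the paper writes $P=\mathbf u_1\mathbf a^\top$ with $\mathbf u_1=\tilde{\rmD}^{1/2}\mathbf 1$ and exactly your $a$. You also use the same geometric decay of $E^{(\ell)}$ off $\mathbf u_1$, and a coupling argument: your triangle inequality through $\psi_\#\mu$ with identity couplings is equivalent to the paper's pointwise estimate integrated against an arbitrary coupling.

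Your bookkeeping is in fact more careful than the paper's. Bounding each row by the operator norm, which gives $C_2=2\|\rmX\|_2$, avoids the paper's step $\|E^{(\ell)}_{i\cdot}\|+\|E^{(\ell)}_{j\cdot}\|\le\|E^{(\ell)}\|_F$, which can be off by a factor $\sqrt2$. For example, take the path $i$--$c$--$j$ with $\rmX=\mathbf e_i-\mathbf e_j$, $\mathcal S=\{i\}$, $\mathcal T=\{j\}$, so $\rho_\perp=1/2$. The left side is $2\cdot 2^{-\ell}$, while the paper's explicit constants give only $\sqrt2\cdot 2^{-\ell}$.

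You also do not need connectivity. Placing $v_1=\tilde{\rmD}^{1/2}\mathbf 1/\|\tilde{\rmD}^{1/2}\mathbf 1\|_2$ in an orthonormal eigenbasis is possible for any graph. When $\lambda_2=1$ your argument goes through verbatim and gives the inequality with $\rho_\perp(\hat{\rmA})=1$, which is true but uninformative. So the component-wise refinement is unnecessary for the statement as posed.
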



\begin{restatable}{proposition}{gcndepth} 
\label{prop:w1-gcn}
Consider an $L$-layer GCN encoder with ReLU activation
\[
\rmX^{(t+1)}=\operatorname{ReLU}\!\big(\hat{\rmA}\,\rmX^{(t)}\rmW^{(t)}\big),
\quad t=0,1,\dots,L-1,
\]
and define $\phi^{(\ell)}(x_i;\rmX,\rmA):=(\rmX^{(\ell)})_{i\cdot}$. Assume $\|\rmW^{(t)}\|_2\le\beta$ for all $t$.
Then for any nonempty $\mathcal S,\mathcal T\subseteq[N]$ and any $\ell\in\{0,1,\dots,L\}$, 
\begin{align}
\label{eq:w1-gcn}
&\mathcal W \big(\phi^{(\ell)}_\#\mu_{\mathcal S},
                  \phi^{(\ell)}_\#\mu_{\mathcal T}\big)
\\ 
&\quad\le
~
\left(C_1 ~
\mathcal W\big(d_\#\mu_{\mathcal S},\,d_\#\mu_{\mathcal T}\big)
+
C_2 ~
(\rho_\perp(\hat{\rmA}))^\ell\right) \beta^\ell, \notag
\end{align}
where $C_1$ and $C_2$ are finite constants depending only on $\rmX$ and $\hat{\rmA}$.
\end{restatable}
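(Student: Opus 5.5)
The plan is to compare the GCN features with a rank-one ``aligned'' component that depends on node $i$ only through its degree statistic $d(x_i)=\sqrt{\tilde d_i}$, and to show that the remainder decays like $(\beta\rho_\perp(\hat{\rmA}))^\ell$. Let $\mathbf{s}\coloneqq\tilde{\rmD}^{1/2}\mathbf 1$, $v\coloneqq\mathbf s/\|\mathbf s\|$, and $P\coloneqq vv^\top$. The key algebraic fact is $\hat{\rmA}\mathbf s=\tilde{\rmD}^{-1/2}\tilde{\rmA}\mathbf 1=\tilde{\rmD}^{-1/2}\tilde{\rmD}\mathbf 1=\mathbf s$. So $v$ is the top eigenvector of the symmetric matrix $\hat{\rmA}$, and $\|\hat{\rmA}(I-P)\|_2=\rho_\perp(\hat{\rmA})$. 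Call a matrix \emph{aligned} if it has the form $\mathbf s c^\top$. The aligned matrices are exactly the range of $P$ acting on the left, and $P Z$ is the Frobenius-closest aligned matrix to $Z$.

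\textbf{Step 1 (aligned matrices are invariant).} If $Z=\mathbf s c^\top$, then $\hat{\rmA}ZW=\mathbf s(W^\top c)^\top$. Since every entry of $\mathbf s$ is positive, positive homogeneity of ReLU entrywise gives $\operatorname{ReLU}(\mathbf s b^\top)=\mathbf s\,\operatorname{ReLU}(b)^\top$. Hence one GCN layer maps aligned matrices to aligned matrices.

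\textbf{Step 2 (contraction of the non-aligned part).} Write $E_t\coloneqq(I-P)\rmX^{(t)}$. By Step 1, $\operatorname{ReLU}(\hat{\rmA}P\rmX^{(t)}\rmW^{(t)})$ is aligned. Using that $(I-P)$ gives the distance to the aligned set, that ReLU is $1$-Lipschitz entrywise, and $\hat{\rmA}E_t=\hat{\rmA}(I-P)E_t$, we get
\begin{align*}
\|E_{t+1}\|_F&\le\big\|\operatorname{ReLU}(\hat{\rmA}\rmX^{(t)}\rmW^{(t)})-\operatorname{ReLU}(\hat{\rmA}P\rmX^{(t)}\rmW^{(t)})\big\|_F\\
&\le\|\hat{\rmA}E_t\|_F\,\beta\le\beta\rho_\perp(\hat{\rmA})\|E_t\|_F .
\end{align*}
Therefore $\|E_\ell\|_F\le(\beta\rho_\perp(\hat{\rmA}))^\ell\|\rmX\|_F$. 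In parallel, $\|\hat{\rmA}\|_2=1$ and ReLU is nonexpansive, so $\|\rmX^{(\ell)}\|_F\le\beta^\ell\|\rmX\|_F$. Writing $P\rmX^{(\ell)}=\mathbf s c_\ell^\top$, this yields $\|c_\ell\|\le\beta^\ell\|\rmX\|_F/\|\mathbf s\|$.

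\textbf{Step 3 (transfer to Wasserstein).} Row $i$ of $\rmX^{(\ell)}$ equals $g(d(x_i))+e_i$, where $g(r)\coloneqq r\,c_\ell$ and $\|e_i\|\le\|E_\ell\|_F$. By the triangle inequality for $\mathcal W$, the pushforward under $\phi^{(\ell)}$ differs from $(g\circ d)_\#\mu$ by at most $\max_i\|e_i\|$; this holds for $\mu_{\mathcal S}$ and $\mu_{\mathcal T}$ alike, using the identity coupling indexwise. The map $g$ is $\|c_\ell\|$-Lipschitz, so
\[
\mathcal W\big((g\circ d)_\#\mu_{\mathcal S},(g\circ d)_\#\mu_{\mathcal T}\big)\le\|c_\ell\|\,\mathcal W(d_\#\mu_{\mathcal S},d_\#\mu_{\mathcal T}).
\]
Combining the three terms gives the claim with $C_1=\|\rmX\|_F/\|\tilde{\rmD}^{1/2}\mathbf 1\|$ and $C_2=2\|\rmX\|_F$, since $(\beta\rho_\perp)^\ell=\rho_\perp^\ell\beta^\ell$. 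The case $\ell=0$ is immediate from the same decomposition. The SGC statement (Proposition~\ref{prop:w1-sgc}) is the special case with no ReLU and the weight matrix absorbed, i.e.\ $\beta=1$.

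The main obstacle is Step 2. A naive recursion comparing $\rmX^{(t)}$ with a fixed aligned trajectory fails, because ReLU can push error back into the $v$ direction, where $\hat{\rmA}$ does not contract. The fix is to re-project onto the aligned set at every layer and use the optimality of $P$ as a projection. That step requires both the invariance of Step 1, which relies on positivity of $\mathbf s$, and the exact eigenvector identity $\hat{\rmA}\mathbf s=\mathbf s$.
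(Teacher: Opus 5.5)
Your proposal is correct and is essentially the paper's proof: the same invariance of the aligned set $\{\tilde{\rmD}^{1/2}\mathbf 1\,c^\top\}$ under a GCN layer, the same layer-by-layer re-projection giving the $(\beta\rho_\perp(\hat{\rmA}))^\ell$ contraction, and the same $\beta^\ell$ growth bound on the aligned coefficient; the only cosmetic difference is that the paper transfers to $\mathcal W$ with a pointwise estimate inside an arbitrary coupling rather than your triangle inequality through $(g\circ d)_\#\mu$. Your factor $2$ in $C_2$ is actually the safer choice: the paper bounds the sum of the $i$-th and $j$-th row norms of $(I-P)\rmX^{(\ell)}$ by its Frobenius norm, which can fail by a factor $\sqrt 2$, so its explicit $C_2=\|(I-P)\rmX\|_F$ needs a small inflation (this does not affect the proposition as stated).
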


\cref{prop:w1-sgc,prop:w1-gcn} provide depth-dependent upper bounds on the Wasserstein distance between node embeddings produced by $\ell$ steps of message passing. Since the bounds hold for arbitrary node subsets, both the intra-class distance $\gW(\phi_\#\mu_{\gI_{\rm train,c}^{(\pi)}}, \phi_\#\mu_{\gI_{\rm test,c}^{(\pi)}})$ and the inter-class distance $\gW(\phi_\#\mu_{\gI_c}, \phi_\#\mu_{\gI_{c'}})$ for $c\neq c'$ admit upper bounds of the same depth-dependent equation.  

Combined with the concentration-separation perspective of~\cref{thm:classwise-ot}, this describes how depth contributes to the trade-off in generalization in terms of concentration and separation. For SGC, since $\rho_\perp(\hat{\rmA})<1$, increasing depth reduces the generalization bound by strengthening the concentration of intra-class representations, i.e., $\mathbb{E}_{\pi'}[\gW(\phi_\#\mu_{\gI_{\rm train,c}^{(\pi')}}, \phi_\#\mu_{\gI_{\rm test,c}^{(\pi')}})]$ decreases, while it can simultaneously increase the bound by weakening inter-class separation, i.e., $\gW(\phi_\#\mu_{\gI_c}, \phi_\#\mu_{\gI_{c'}})$ for $c\neq c'$ decreases. For GCN, the bound may either increase or decrease with depth depending on the magnitude of $\beta$, but the same trade-off principle applies. Consequently, the impact of depth on generalization error cannot be characterized by a simple monotonic relationship. It must be understood by jointly considering these trade-offs.

\begin{figure*}[t]
\centering
\resizebox{\textwidth}{!}{%
\begin{tikzpicture}
\begin{groupplot}[
    group style={
        group size=4 by 2,
        horizontal sep=1.2cm,
        vertical sep=0.8cm,
    },
    width=7cm, 
    height=5.5cm,
    axis line style={line width=1pt},
    ymajorgrids=true,
    grid style={line width=0.2pt, dashed, gray!50},
    label style={font=\Large},
    tick label style={font=\large},
    y tick label style={xshift=0.5ex, /pgf/number format/fixed, /pgf/number format/precision=2},
    scaled y ticks=false,
    legend style={
        font=\large, 
        draw=gray!40, 
        rounded corners=1pt,
        inner sep=2pt,
    },
    xtick={1,8,16,24,32},
    xmin=0, xmax=33,
    every axis plot/.append style={very thick},
    major tick length=0pt,
    minor tick length=0pt,
    axis background/.style={fill=none},
]


\nextgroupplot[
    ylabel={\Large\textbf{SGC}},
    legend pos=north east,
    ymin=0, ymax=0.15,
    ytick={0.04, 0.08, 0.12},
] 

\addplot[color=colorGlobal, mark=*, mark size=2.5pt, mark options={solid, fill=colorGlobal}] coordinates {
    (1, 0.128065) (4, 0.057677) (8, 0.039767) (12, 0.032353) 
    (16, 0.028130) (20, 0.025368) (24, 0.023414) (28, 0.021955) (32, 0.020818)
};
\addplot[color=colorOracle, mark=square*, mark size=2.5pt, mark options={solid, fill=colorOracle}] coordinates {
    (1, 0.018978) (4, 0.008771) (8, 0.006088) (12, 0.004977) 
    (16, 0.004349) (20, 0.003943) (24, 0.003660) (28, 0.003450) (32, 0.003289)
};
\addplot[color=colorSep, mark=triangle*, mark size=2.8pt, mark options={solid, fill=colorSep}] coordinates {
    (1, 0.132112) (4, 0.081994) (8, 0.063999) (12, 0.055104) 
    (16, 0.049448) (20, 0.045412) (24, 0.042324) (28, 0.039853) (32, 0.037812)
};
\legend{$\mathcal{W}_G$, $\mathcal{W}_C$, $\mathcal{W}_S$}

\nextgroupplot[
    legend pos=north east,
    ymin=0.08, ymax=0.20,
    ytick={0.10, 0.14, 0.18},
]

\addplot[name path=gen_upper_sgc, draw=none, forget plot] coordinates {
    (1, 0.176486) (4, 0.182562) (8, 0.196019) (12, 0.179887) 
    (16, 0.172723) (20, 0.166409) (24, 0.149686) (28, 0.136751) (32, 0.117663)
};
\addplot[name path=gen_lower_sgc, draw=none, forget plot] coordinates {
    (1, 0.169716) (4, 0.170180) (8, 0.170785) (12, 0.167429) 
    (16, 0.155313) (20, 0.139039) (24, 0.121004) (28, 0.102833) (32, 0.088423)
};
\addplot[colorGen, fill opacity=0.18,  forget plot] fill between[of=gen_upper_sgc and gen_lower_sgc];
\addplot[color=colorGen, mark=diamond*, mark size=2.8pt, mark options={solid, fill=colorGen}] coordinates {
    (1, 0.173101) (4, 0.176371) (8, 0.183402) (12, 0.173658) 
    (16, 0.164018) (20, 0.152724) (24, 0.135345) (28, 0.119792) (32, 0.103043)
};
\legend{Gen. error}

\nextgroupplot[
    legend pos=north east,
    ymin=15, ymax=33,
    ytick={19, 24, 29},
]

\addplot[name path=glob_upper_sgc, draw=none, forget plot] coordinates {
    (1, 24.871583) (4, 29.147954) (8, 31.427520) (12, 28.738226) 
    (16, 25.734182) (20, 23.562170) (24, 21.862023) (28, 20.572990) (32, 19.823652)
};
\addplot[name path=glob_lower_sgc, draw=none, forget plot] coordinates {
    (1, 23.527403) (4, 26.529060) (8, 27.496988) (12, 24.993300) 
    (16, 21.806392) (20, 19.167314) (24, 17.388519) (28, 16.176964) (32, 15.216708)
};
\addplot[colorGlobal, fill opacity=0.18, forget plot] fill between[of=glob_upper_sgc and glob_lower_sgc];
\addplot[color=colorGlobal, mark=*, mark size=2.5pt, mark options={solid, fill=colorGlobal}] coordinates {
    (1, 24.199493) (4, 27.838507) (8, 29.462254) (12, 26.865763) 
    (16, 23.770287) (20, 21.364742) (24, 19.625271) (28, 18.374977) (32, 17.520180)
};
\legend{Global bound}

\nextgroupplot[
    legend pos=north east,
    ymin=40, ymax=70,
    ytick={47, 55, 63},
]

\addplot[name path=orac_upper_sgc, draw=none, forget plot] coordinates {
    (1, 43.823586) (4, 63.565194) (8, 66.660645) (12, 60.982298) 
    (16, 55.402584) (20, 49.889252) (24, 46.830317) (28, 46.001814) (32, 45.003447)
};
\addplot[name path=orac_lower_sgc, draw=none, forget plot] coordinates {
    (1, 40.794440) (4, 55.431892) (8, 57.942245) (12, 53.714968) 
    (16, 48.274708) (20, 44.267670) (24, 43.050641) (28, 41.297404) (32, 40.777689)
};
\addplot[colorOracle, fill opacity=0.18,  forget plot] fill between[of=orac_upper_sgc and orac_lower_sgc];
\addplot[color=colorOracle, mark=square*, mark size=2.5pt, mark options={solid, fill=colorOracle}] coordinates {
    (1, 42.309013) (4, 59.498543) (8, 62.301445) (12, 57.348633) 
    (16, 51.838646) (20, 47.078461) (24, 44.940479) (28, 43.649609) (32, 42.890568)
};
\legend{\shortstack{Class-wise\\bound}}


\nextgroupplot[
    xlabel={\large{Depth}},
    ylabel={\Large\textbf{GCN}},
    legend pos=north east,
    ymode=log,
    ymin=0.01, ymax=100,
    ytick={0.1, 1, 10},
    yticklabels={0.1, 1, 10},
    yminorgrids=false,
    minor y tick num=0,
]
\addplot[color=colorGlobal, mark=*, mark size=2.5pt, mark options={solid, fill=colorGlobal}] coordinates {
    (1, 0.306721) (4, 3.180895) (8, 3.657705) (12, 2.485063) 
    (16, 2.331497) (20, 0.869264) (24, 0.183768) (28, 0.446442) (32, 0.477381)
};
\addplot[color=colorOracle, mark=square*, mark size=2.5pt, mark options={solid, fill=colorOracle}] coordinates {
    (1, 0.041992) (4, 0.490122) (8, 0.553428) (12, 0.300568) 
    (16, 0.349253) (20, 0.155121) (24, 0.035533) (28, 0.084987) (32, 0.092615)
};
\addplot[color=colorSep, mark=triangle*, mark size=2.8pt, mark options={solid, fill=colorSep}] coordinates {
    (1, 0.760958) (4, 16.673748) (8, 22.286845) (12, 11.306659) 
    (16, 12.076364) (20, 7.085144) (24, 0.974332) (28, 2.417133) (32, 2.708764)
};
\legend{$\mathcal{W}_G$, $\mathcal{W}_C$, $\mathcal{W}_S$}

\nextgroupplot[
    xlabel={\large{Depth}},
    legend pos=north east,
    ymin=0, ymax=0.24,
    ytick={0.06, 0.12, 0.18},
]
\addplot[name path=gen_upper_gcn, draw=none, forget plot] coordinates {
    (1, 0.199895) (4, 0.225425) (8, 0.179407) (12, 0.160269) 
    (16, 0.140093) (20, 0.072011) (24, 0.032714) (28, 0.045723) (32, 0.058288)
};
\addplot[name path=gen_lower_gcn, draw=none, forget plot] coordinates {
    (1, 0.183227) (4, 0.194611) (8, 0.134095) (12, 0.070303) 
    (16, 0.053913) (20, 0.025245) (24, 0.004596) (28, 0.021725) (32, 0.023936)
};
\addplot[colorGen, fill opacity=0.18,  forget plot] fill between[of=gen_upper_gcn and gen_lower_gcn];
\addplot[color=colorGen, mark=diamond*, mark size=2.8pt, mark options={solid, fill=colorGen}] coordinates {
    (1, 0.191561) (4, 0.210018) (8, 0.156751) (12, 0.115286) 
    (16, 0.097003) (20, 0.048628) (24, 0.018655) (28, 0.033724) (32, 0.041112)
};
\legend{Gen. error}

\nextgroupplot[
    xlabel={\large{Depth}},
    legend pos=north east,
    ymin=0, ymax=36,
    ytick={9, 18, 27},
]
\addplot[name path=glob_upper_gcn, draw=none, forget plot] coordinates {
    (1, 17.534718) (4, 34.913392) (8, 16.902073) (12, 15.639524) 
    (16, 10.588839) (20, 5.914566) (24, 1.911999) (28, 3.448826) (32, 2.952533)
};
\addplot[name path=glob_lower_gcn, draw=none, forget plot] coordinates {
    (1, 15.742202) (4, 18.721690) (8, 8.069785) (12, 0.195186) 
    (16, 1.316165) (20, 0.740942) (24, 1.084477) (28, 1.255180) (32, 1.195651)
};
\addplot[colorGlobal, fill opacity=0.18,  forget plot] fill between[of=glob_upper_gcn and glob_lower_gcn];
\addplot[color=colorGlobal, mark=*, mark size=2.5pt, mark options={solid, fill=colorGlobal}] coordinates {
    (1, 16.638460) (4, 26.817541) (8, 12.485929) (12, 7.917355) 
    (16, 5.952502) (20, 3.327754) (24, 1.498238) (28, 2.352003) (32, 2.074092)
};
\legend{Global bound}

\nextgroupplot[
    xlabel={\large{Depth}},
    legend pos=north east,
    ymin=0, ymax=48,
    ytick={12, 24, 36},
]
\addplot[name path=orac_upper_gcn, draw=none, forget plot] coordinates {
    (1, 17.002244) (4, 45.838615) (8, 22.493146) (12, 19.209492) 
    (16, 19.504881) (20, 10.301518) (24, 3.679719) (28, 6.460896) (32, 8.474006)
};
\addplot[name path=orac_lower_gcn, draw=none, forget plot] coordinates {
    (1, 13.082738) (4, 23.243851) (8, 11.657320) (12, 2.272844) 
    (16, 2.185591) (20, 2.880394) (24, 1.820129) (28, 3.534670) (32, 1.571290)
};
\addplot[colorOracle, fill opacity=0.18,  forget plot] fill between[of=orac_upper_gcn and orac_lower_gcn];
\addplot[color=colorOracle, mark=square*, mark size=2.5pt, mark options={solid, fill=colorOracle}] coordinates {
    (1, 15.042491) (4, 34.541233) (8, 17.075233) (12, 10.741168) 
    (16, 10.845236) (20, 6.590956) (24, 2.749924) (28, 4.997783) (32, 5.022648)
};
\legend{\shortstack{Class-wise\\bound}}

\end{groupplot}

\node[font=\Large] at (group c1r2.below south) [yshift= -0.5cm] {(a) \textbf{Wasserstein distances}};
\node[font=\Large] at (group c2r2.below south) [yshift=-0.5cm] {(b) \textbf{Generalization error}};
\node[font=\Large] at (group c3r2.below south) [yshift=-0.5cm] {(c) \textbf{Global bound}};
\node[font=\Large] at (group c4r2.below south) [yshift=-0.5cm] {(d) \textbf{Class-wise bound}};

\end{tikzpicture}
}

\caption{Depth analysis on SGC (top) and GCN (bottom) with Cora dataset.}
\label{fig:depth_analysis}
\end{figure*}
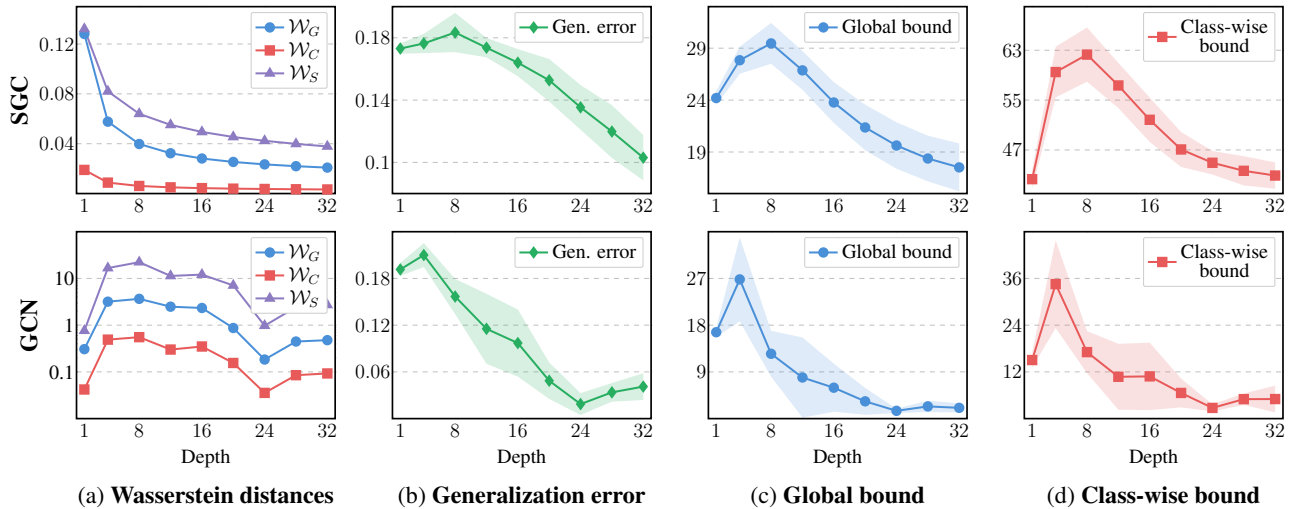
\subsection{Empirical studies}

In this section, we provide empirical evidence for the depth-dependent behavior suggested by our theorems and propositions. Specifically, we show that our proposed generalization bounds capture the resulting non-monotonic relationship between depth and generalization error.

To empirically validate our propositions, we measure three key Wasserstein distances that are linked to our proposed bounds as depth $\ell$ increases. First, we compute the Wasserstein distance between the training and test feature distributions, denoted by $\globalWD$:
\[
\globalWD
\coloneq
\mathcal{W}\big(\phi_\#\mu_{\gI_{\rm{train}}^{(\pi)}},\,\phi_\#\mu_{\gI_{\rm{test}}^{(\pi)}}\big).
\]
Next, we measure the average expected Wasserstein distance for each class, denoted by $\classWD$:
\[
\classWD
\coloneq
\frac{1}{K}\sum_{c=1}^K 
\mathbb E_{\pi'}\!\Big[
\frac{m_c^{(\pi')}}{m}\;
\mathcal{W}\big(\phi_\#\mu_{\gI_{\rm{train},c}^{(\pi')}},\,\phi_\#\mu_{\gI_{\rm{test},c}^{(\pi')}}\big)
\Big].
\]
Lastly, we consider the minimum value of inter-class Wasserstein distances between different classes, denoted by $\gammaWD$:
\[
\gammaWD 
\coloneq
\min_{{c_1}\neq{c_2}}
\mathcal{W}\big(\phi_\#\mu_{\gI_{c_1}},\,\phi_\#\mu_{\gI_{c_2}}\big).
\]
The results are presented in~\cref{fig:depth_analysis}, which illustrates the depth analysis for the SGC model (top row) and the GCN model (bottom row) on the Cora dataset. For the SGC model, \cref{fig:depth_analysis}~(a)
 shows that all three Wasserstein distances ($\gW_G, \gW_C, \gW_S$) exponentially decay as the depth increases. This behavior is consistent with~\cref{prop:w1-sgc}. As depth increases, both positive and negative effects on generalization arise by strengthening the concentration of intra-class representations ($\gW_C \downarrow$) and weakening inter-class separation ($\gW_S \downarrow$).

The trade-off is clearly reflected in the empirical generalization gap shown in~ \cref{fig:depth_analysis}~(b). The plot exhibits a non-monotonic relationship with depth. The gap initially increases as inter-class concentration dominates, but eventually decreases again as the model becomes deeper and the benefit of intra-class concentration outweighs. Notably, as shown in~\cref{fig:depth_analysis}~(c) and (d), our global and class-wise bounds in~\cref{thm:global-ot,thm:classwise-ot} closely track this non-monotonic behavior, presenting the same trend as the empirical generalization gap. Similarly, for the GCN model (bottom row), our bound successfully captures fluctuations in generalization error as the depth varies. Although the specific trajectories of $W_G, W_C$ and $W_S$  for GCN differ from those of SGC due to the influence of the learned weights' spectral norm (i.e. $\beta$ in~\cref{prop:w1-gcn}), the fundamental interpretation remains unchanged.

Such non-monotonic behavior is not well explained by prior work~\citep{congNEURIPS2021_provable} that analyzes the relationship between depth and generalization error for GCN-type models based on transductive uniform stability. Their result suggests that bound grows monotonically and exponentially with depth $L$ (i.e., $\mathcal{O}(\max_i d_i)^{LT}$ for SGC and $\mathcal{O}((L+2)^T(\max_i\sqrt{d_i}\beta)^{2L})$ for GCN, where $T$ is the number of training epochs). As seen in \cref{fig:depth_analysis}(b), the depth-generalization behavior in practice can differ from this monotone trend. Therefore, our representation-based bound provides a more reliable basis for understanding the observed depth-generalization relationship.

\subsection{Connections to Prior Work and Future Directions}
In this subsection, we discuss how our framework connects to prior oversmoothing research and provides a future direction for performance improvement.
\paragraph{Connections to prior oversmoothing work}
Previous oversmoothing studies~\citep{li2018deeper,Oono2020Graph,cai2020note} establish that node features in GCN-type models converge to a degree-scaled feature space as aggregation is repeatedly applied. Furthermore, \citet{cai2020note,Oono2020Graph} propose measures to quantify oversmoothing and show that they decay exponentially as depth increases. Since our Wasserstein distance term can also measure distances between features, it can be conceptually interpreted as an oversmoothing measure. Moreover, our Wasserstein distance in~\cref{prop:w1-sgc,prop:w1-gcn} recovers the same exponential convergence to a degree-scaled subspace under the same conditions on the aggregation operator and weight matrices as in prior analyses (see~\cref{eq:w1-sgc,eq:w1-gcn}).

\paragraph{Towards performance improvement}
The key advantage of connecting our theory to oversmoothing is that our Wasserstein distance terms are directly linked to our generalization bound. This clarifies that smoothing can be beneficial by improving intra-class concentration while introducing a trade-off by weakening inter-class separation. Therefore, simply preventing smoothing without considering class structure may not be effective, which helps explain why previous oversmoothing measures show empirically weak correlation with performance~\citep{rusch2023survey, heo2025influence}. A related view is offered by \citet{wu2023non-asym}, who decompose the effect of depth into two competing phenomena: feature denoising and feature mixing, conceptually aligned with our intra-class concentration and inter-class separation. While they insightfully connect these competing effects to model performance, their framework does not link these effects to a generalization bound and relies on idealized assumptions. In contrast, our bounds explicitly connect these representation-geometry effects to generalization in the transductive setting, providing a stepping stone for principled approaches to improving GNN performance and potentially other architectures.

\section{Conclusion}
In this work, we establish two representation-based generalization bounds for distribution-free transductive learning via optimal transport. Our bounds are practically computable, and experiments on graph node classification show consistent alignment with empirical generalization error. Furthermore, we specialize the theory to GNNs and obtain a depth-aware analysis that accounts for the non-monotonic dependence of generalization on depth.

\section*{Impact Statement}
This work is primarily theoretical and does not involve new datasets or human subjects. Nevertheless, our results pertain to graph neural networks, which are often applied to sensitive relational data such as social or biological networks. 
Stronger generalization guarantees may encourage broader deployment of GNNs in such domains, raising concerns about privacy, fairness, and potential misuse. We emphasize that our theoretical bounds should not be interpreted as guarantees of equitable or unbiased performance, and their responsible application requires careful consideration of dataset biases, privacy risks, and broader societal impacts.




\bibliography{icml2026}

@article{bartlett2017spectrally,
  title={Spectrally-normalized margin bounds for neural networks},
  author={Bartlett, Peter L and Foster, Dylan J and Telgarsky, Matus J},
  journal={Advances in neural information processing systems},
  volume={30},
  year={2017}
}

@article{rusch2023survey,
  title={A survey on oversmoothing in graph neural networks},
  author={Rusch, T Konstantin and Bronstein, Michael M and Mishra, Siddhartha},
  journal={arXiv preprint arXiv:2303.10993},
  year={2023}
}

@article{bartlett2002rademacher,
  title={Rademacher and gaussian complexities: Risk bounds and structural results},
  author={Bartlett, Peter L and Mendelson, Shahar},
  journal={Journal of machine learning research},
  volume={3},
  number={Nov},
  pages={463--482},
  year={2002}
}

@article{sen2008collective,
  title={Collective classification in network data},
  author={Sen, Prithviraj and Namata, Galileo and Bilgic, Mustafa and Getoor, Lise and Galligher, Brian and Eliassi-Rad, Tina},
  journal={AI magazine},
  volume={29},
  number={3},
  pages={93--93},
  year={2008}
}

@inproceedings{yang2016revisiting,
  title={Revisiting semi-supervised learning with graph embeddings},
  author={Yang, Zhilin and Cohen, William and Salakhudinov, Ruslan},
  booktitle={International conference on machine learning},
  pages={40--48},
  year={2016},
  organization={PMLR}
}

@inproceedings{mcauley2015image,
  title={Image-based recommendations on styles and substitutes},
  author={McAuley, Julian and Targett, Christopher and Shi, Qinfeng and Van Den Hengel, Anton},
  booktitle={Proceedings of the 38th international ACM SIGIR conference on research and development in information retrieval},
  pages={43--52},
  year={2015}
}

@inproceedings{
jiang2019predicting,
title={Predicting the Generalization Gap in Deep Networks with Margin Distributions},
author={Yiding Jiang and Dilip Krishnan and Hossein Mobahi and Samy Bengio},
booktitle={International Conference on Learning Representations},
year={2019},
}

@article{solomon2022k,
  title={k-variance: A clustered notion of variance},
  author={Solomon, Justin and Greenewald, Kristjan and Nagaraja, Haikady},
  journal={SIAM Journal on Mathematics of Data Science},
  volume={4},
  number={3},
  pages={957--978},
  year={2022},
  publisher={SIAM}
}

@book{vapnik2006estimation,
  title={Estimation of dependences based on empirical data},
  author={Vapnik, Vladimir},
  year={2006},
  publisher={Springer Science \& Business Media}
}

@inproceedings{platonov2023critical,
  title={A critical look at evaluation of GNNs under heterophily: Are we really making progress?},
  author={Platonov, Oleg and Kuznedelev, Denis and Diskin, Michael and Babenko, Artem and Prokhorenkova, Liudmila},
  booktitle={The Eleventh International Conference on Learning Representations},
  year={2023}
}

@inproceedings{
Oono2020Graph,
title={Graph Neural Networks Exponentially Lose Expressive Power for Node Classification},
author={Kenta Oono and Taiji Suzuki},
booktitle={International Conference on Learning Representations},
year={2020},
}

@incollection{vapnik2015uniform,
  title={On the uniform convergence of relative frequencies of events to their probabilities},
  author={Vapnik, Vladimir N and Chervonenkis, A Ya},
  booktitle={Measures of complexity: festschrift for alexey chervonenkis},
  pages={11--30},
  year={2015},
  publisher={Springer}
}

@misc{natekar2020representationcompwin,
      title={Representation Based Complexity Measures for Predicting Generalization in Deep Learning}, 
      author={Parth Natekar and Manik Sharma},
      year={2020},
      eprint={2012.02775},
      archivePrefix={arXiv},
      primaryClass={cs.LG}
}

@article{jiang2020neuripscomp,
  title={Neurips 2020 competition: Predicting generalization in deep learning},
  author={Jiang, Yiding and Foret, Pierre and Yak, Scott and Roy, Daniel M and Mobahi, Hossein and Dziugaite, Gintare Karolina and Bengio, Samy and Gunasekar, Suriya and Guyon, Isabelle and Neyshabur, Behnam},
  journal={arXiv preprint arXiv:2012.07976},
  year={2020}
}

@article{nagarajan2019uniform,
  title={Uniform convergence may be unable to explain generalization in deep learning},
  author={Nagarajan, Vaishnavh and Kolter, J Zico},
  journal={Advances in Neural Information Processing Systems},
  volume={32},
  year={2019}
}

@inproceedings{trans_stab_2008,
author = {Cortes, Corinna and Mohri, Mehryar and Pechyony, Dmitry and Rastogi, Ashish},
title = {Stability of transductive regression algorithms},
year = {2008},
isbn = {9781605582054},
publisher = {Association for Computing Machinery},
address = {New York, NY, USA},
doi = {10.1145/1390156.1390179},
booktitle = {Proceedings of the 25th International Conference on Machine Learning},
pages = {176–183},
numpages = {8},
location = {Helsinki, Finland},
series = {ICML '08}
}

@InProceedings{permute_RC_2015,
author="Tolstikhin, Ilya
and Zhivotovskiy, Nikita
and Blanchard, Gilles",
editor="Chaudhuri, Kamalika
and GENTILE, CLAUDIO
and Zilles, Sandra",
title="Permutational Rademacher Complexity",
booktitle="Algorithmic Learning Theory",
year="2015",
publisher="Springer International Publishing",
address="Cham",
pages="209--223",
isbn="978-3-319-24486-0"
}

@InProceedings{pmlr-v35-tolstikhin14,
  title = 	 {Localized Complexities for Transductive Learning},
  author = 	 {Tolstikhin, Ilya and Blanchard, Gilles and Kloft, Marius},
  booktitle = 	 {Proceedings of The 27th Conference on Learning Theory},
  pages = 	 {857--884},
  year = 	 {2014},
  editor = 	 {Balcan, Maria Florina and Feldman, Vitaly and Szepesvári, Csaba},
  volume = 	 {35},
  series = 	 {Proceedings of Machine Learning Research},
  address = 	 {Barcelona, Spain},
  month = 	 {13--15 Jun},
  publisher =    {PMLR},
}

@inproceedings{congNEURIPS2021_provable,
 author = {Cong, Weilin and Ramezani, Morteza and Mahdavi, Mehrdad},
 booktitle = {Advances in Neural Information Processing Systems},
 editor = {M. Ranzato and A. Beygelzimer and Y. Dauphin and P.S. Liang and J. Wortman Vaughan},
 pages = {9936--9949},
 publisher = {Curran Associates, Inc.},
 title = {On Provable Benefits of Depth in Training Graph Convolutional Networks},
 volume = {34},
 year = {2021}
}

@inproceedings{EsserNEURIPS2021_learning,
 author = {Esser, Pascal and Chennuru Vankadara, Leena and Ghoshdastidar, Debarghya},
 booktitle = {Advances in Neural Information Processing Systems},
 editor = {M. Ranzato and A. Beygelzimer and Y. Dauphin and P.S. Liang and J. Wortman Vaughan},
 pages = {27043--27056},
 publisher = {Curran Associates, Inc.},
 title = {Learning Theory Can (Sometimes) Explain Generalisation in Graph Neural Networks},
 volume = {34},
 year = {2021}
}

@inproceedings{tang2023towards,
  title={Towards understanding generalization of graph neural networks},
  author={Tang, Huayi and Liu, Yong},
  booktitle={International Conference on Machine Learning},
  pages={33674--33719},
  year={2023},
  organization={PMLR}
}

@InProceedings{pac2014begin,
  title = 	 {{PAC-Bayesian Theory for Transductive Learning}},
  author = 	 {Bégin, Luc and Germain, Pascal and Laviolette, François and Roy, Jean-Francis},
  booktitle = 	 {Proceedings of the Seventeenth International Conference on Artificial Intelligence and Statistics},
  pages = 	 {105--113},
  year = 	 {2014},
  editor = 	 {Kaski, Samuel and Corander, Jukka},
  volume = 	 {33},
  series = 	 {Proceedings of Machine Learning Research},
  address = 	 {Reykjavik, Iceland},
  month = 	 {22--25 Apr},
  publisher =    {PMLR}
}

@inproceedings{el2006stable,
  title={Stable transductive learning},
  author={El-Yaniv, Ran and Pechyony, Dmitry},
  booktitle={International Conference on Computational Learning Theory},
  pages={35--49},
  year={2006},
  organization={Springer}
}

@inproceedings{begin2014pac,
  title={PAC-Bayesian theory for transductive learning},
  author={B{\'e}gin, Luc and Germain, Pascal and Laviolette, Fran{\c{c}}ois and Roy, Jean-Francis},
  booktitle={Artificial Intelligence and Statistics},
  pages={105--113},
  year={2014},
  organization={PMLR}
}

@inproceedings{
wu2023non-asym,
title={A Non-Asymptotic Analysis of Oversmoothing in Graph Neural Networks},
author={Xinyi Wu and Zhengdao Chen and William Wei Wang and Ali Jadbabaie},
booktitle={The Eleventh International Conference on Learning Representations },
year={2023},
}

@inproceedings{
li2025towards,
title={Towards Bridging Generalization and Expressivity of Graph Neural Networks},
author={Shouheng Li and Floris Geerts and Dongwoo Kim and Qing Wang},
booktitle={The Thirteenth International Conference on Learning Representations},
year={2025},
}

@article{el2009transductive,
  title={Transductive rademacher complexity and its applications},
  author={El-Yaniv, Ran and Pechyony, Dmitry},
  journal={Journal of Artificial Intelligence Research},
  volume={35},
  pages={193--234},
  year={2009}
}

@article{chuang2021measuring,
  title={Measuring generalization with optimal transport},
  author={Chuang, Ching-Yao and Mroueh, Youssef and Greenewald, Kristjan and Torralba, Antonio and Jegelka, Stefanie},
  journal={Advances in neural information processing systems},
  volume={34},
  pages={8294--8306},
  year={2021}
}

@article{cai2020note,
title={A note on over-smoothing for graph neural networks},
year={2020},
author={Cai, Chen and Wang, Yusu},
booktitle={ICML Graph Representation Learning and Beyond (GRL+) Workshop},
}

@inproceedings{li2018deeper,
  title={Deeper insights into graph convolutional networks for semi-supervised learning},
  author={Li, Qimai and Han, Zhichao and Wu, Xiao-Ming},
  booktitle={Proceedings of the AAAI conference on artificial intelligence},
  volume={32},
  year={2018}
}

@inproceedings{wu2019simplifying,
  title={Simplifying graph convolutional networks},
  author={Wu, Felix and Souza, Amauri and Zhang, Tianyi and Fifty, Christopher and Yu, Tao and Weinberger, Kilian},
  booktitle={International conference on machine learning},
  pages={6861--6871},
  year={2019},
  organization={Pmlr}
}

@article{jiang2019fantastic,
  title={Fantastic generalization measures and where to find them},
  author={Jiang, Yiding and Neyshabur, Behnam and Mobahi, Hossein and Krishnan, Dilip and Bengio, Samy},
  journal={arXiv preprint arXiv:1912.02178},
  year={2019}
}

@inproceedings{lyle2023understanding,
  title={Understanding plasticity in neural networks},
  author={Lyle, Clare and Zheng, Zeyu and Nikishin, Evgenii and Pires, Bernardo Avila and Pascanu, Razvan and Dabney, Will},
  booktitle={International Conference on Machine Learning},
  pages={23190--23211},
  year={2023},
  organization={PMLR}
}

@article{velivckovic2017graph,
title={Graph Attention Networks},
author={Petar Veličković and Guillem Cucurull and Arantxa Casanova and Adriana Romero and Pietro Liò and Yoshua Bengio},
booktitle={International Conference on Learning Representations},
year={2018},
}

@article{hamilton2017inductive,
  title={Inductive representation learning on large graphs},
  author={Hamilton, Will and Ying, Zhitao and Leskovec, Jure},
  journal={Advances in neural information processing systems},
  volume={30},
  year={2017}
}

@article{kipf2016semi,
title={Semi-Supervised Classification with Graph Convolutional Networks},
author={Thomas N. Kipf and Max Welling},
booktitle={International Conference on Learning Representations},
year={2017},
}

@inproceedings{chen2020simple,
  title={Simple and deep graph convolutional networks},
  author={Chen, Ming and Wei, Zhewei and Huang, Zengfeng and Ding, Bolin and Li, Yaliang},
  booktitle={International conference on machine learning},
  pages={1725--1735},
  year={2020},
  organization={PMLR}
}

@article{heo2025influence,
  title={Influence Functions for Edge Edits in Non-Convex Graph Neural Networks},
  author={Heo, Jaeseung and Yun, Kyeongheung and Yoon, Seokwon and Park, MoonJeong and Ok, Jungseul and Kim, Dongwoo},
  journal={arXiv preprint arXiv:2506.04694},
  year={2025}
}
\bibliographystyle{icml2026}

\newpage
\appendix
\onecolumn
 
\section{Proofs}

\subsection{Proof of the Global Wasserstein Bound}\label{apdx_sec:proof of Global}
\globalbound*

\begin{proof}
The goal of the theorem is 
bounding the generalization gap between the zero-one loss of test set $R_u(f\circ\phi;\pi)$ and the $\gamma$-margin loss of train set $R_{m,\gamma}(f\circ\phi;\pi)$ of a model $f\circ\phi$ and permutation $\pi$, where:
\begin{equation}
\label{eq:apdx:test_loss}
    R_u(f\circ\phi;\pi)\coloneq \frac{1}{u}\sum_{i\in\testi}
    \mathbbm{1}_{\rho_f(\phi(\mathbf{x}_i),y_i)\leq0}\;,
\end{equation}
and
\begin{equation}
\label{eq:apdx:margin_loss}
    R_{m,\gamma}(f\circ\phi;\pi)\coloneqq \frac{1}{m}\sum_{i\in\traini}
    \mathbbm{1}_{\rho_f(\phi(\mathbf{x}_i),y_i)\leq\gamma}\;,
\end{equation}
with $\gamma>0$.
We introduce a margin loss $L_\gamma$ with $\gamma>0$ defined by $L_\gamma(u)\coloneq\mathbbm{1}_{u\leq0}+(1-\frac{u}{\gamma})\mathbbm{1}_{0\leq u\leq\gamma}$. 
We first derive the upper bound on the difference of the margin loss on the test and train sets, i.e.:
\begin{equation}\label{apdx:eq:marginloss_diff}
    \frac{1}{u}\sum_{i\in\gI_{\mathrm{test}}^{(\pi)}}L_\gamma(\rho_f(\phi(\rvx_i),y_i))
    -\frac{1}{m}\sum_{{j}\in\gI_{\mathrm{train}}^{(\pi)}}L_\gamma(\rho_f(\phi(\rvx_j),y_j))\;.
\end{equation}
By using the empirical distribution $\mu_{\gI}\coloneq \frac{1}{|\gI|}\sum_{i\in\gI}\delta(\rvx_i)$, 
where $\delta(\cdot)$ denotes the Dirac delta function, the empirical mean can be represented as the expectation with respect to the empirical distribution.
This allows us to rewrite \cref{apdx:eq:marginloss_diff} as:

\begin{align}\label{apdx_eq:diff_of_margin}
    \mathbb{E}_{\rvx\sim\mu_{\mathcal{I}^{(\pi)}_{\mathrm{test}}}}[L_\gamma(\rho_f(\phi(\rvx),y_\rvx))] 
    &- \mathbb{E}_{\rvx'\sim\mu_{\mathcal{I}^{(\pi)}_{\mathrm{train}}}}[L_\gamma(\rho_f(\phi(\rvx'),y_{\rvx'}))] \nonumber\\
    =& \
    \int L_\gamma(\rho_f(\phi(\rvx),y_\rvx))d\mu_{\mathcal{I}^{(\pi)}_{\mathrm{test}}}
    - \int L_\gamma(\rho_f(\phi(\rvx'),y_{\rvx'}))d\mu_{\mathcal{I}^{(\pi)}_{\mathrm{train}}} \nonumber \\
    =&\int \Big(L_\gamma\big(\rho_f(\phi(\rvx), y_{\rvx})\big)
    - L_\gamma \big(\rho_f(\phi(\rvx'),y_{\rvx'})\big)\Big)d\,{T}(\rvx, \rvx') \nonumber \\
    \leq &\int \big\|L_\gamma\big(\rho_f(\phi(\rvx), y_{\rvx})\big)
    - L_\gamma \big(\rho_f(\phi(\rvx'),y_{\rvx'})\big)\big\|d\,{T}(\rvx, \rvx') \nonumber\\
    \leq& \,\frac{1}{\gamma}\,\int \|\rho_f(\phi(\rvx), y_\rvx) - \rho_f(\phi(\rvx'),y_{\rvx'}) \| d\,{T}(\rvx,\rvx')
\end{align}

where $y_\rvx$ and $y_{\rvx'}$ denote the labels of $\rvx$ and $\rvx'$, respectively. The last inequality in~\cref{apdx_eq:diff_of_margin} is based on the fact that $L_\gamma$ is $\frac{1}{\gamma}$-Lipschitz.

Now, we will bound $\|\rho_f(\phi(\rvx), y_{\rvx}) - \rho_f(\phi(\rvx'), y_{\rvx'}) \|$ from our samples using an advantage of transductive settings. Since we cannot access the label of a test sample ($y_{\rvx})$, we define $M$ with all samples $\rvx, \rvx' \in \{\rvx_i\}^{m+u}_{i=1}$ and $y, y' \in \mathcal Y$.
Define: 
\begin{equation}
    M(f,\phi)\coloneq \max_{\rvx, \rvx', y'}\frac{\|\rho_f(\phi(\rvx), y_{\rvx}) - \rho_f(\phi(\rvx'),y')\|}{\|\phi(\rvx)-\phi(\rvx')\|} \qquad \text{for}\ \rvx\neq \rvx'
\end{equation}
Then,  
\begin{equation} \label{apdx_eq:boudnM}
\|\rho_f(\phi(\rvx),y_\rvx) - \rho_f(\phi(\rvx'), y_{\rvx'}) \| \leq M(f,\phi) \,\|\phi(\rvx)-\phi(\rvx')\|
\end{equation}

From the~\cref{apdx_eq:diff_of_margin} and~\cref{apdx_eq:boudnM}, we have:

\begin{align}\label{apdx_eq:appliedbd}
    \mathbb{E}_{\rvx\sim\mu_{\mathcal{I}^{(\pi)}_{\mathrm{test}}}}[L_\gamma(\rho_f(\phi(\rvx),y))]&
    - \mathbb{E}_{\rvx'\sim\mu_{\mathcal{I}^{(\pi)}_{\mathrm{train}}}}[L_\gamma(\rho_f(\phi(\rvx'),y'))] \nonumber\\
    &\leq \int \Big(L_\gamma\big(\rho_f(\phi(\rvx),y)\big)
    - L_\gamma \big(\rho_f(\phi(\rvx'),y')\big)\Big)d\,{T}(\rvx,\rvx') \nonumber \\
    &\leq\, \frac{1}{\gamma}\,\int \|\rho_f(\phi(\rvx),y) - \rho_f(\phi(\rvx'),y') \| d\,{T}(\rvx,\rvx') \nonumber\\
    &\leq\,\frac{M(f,\phi)}{\gamma}\,\int \|\phi(\rvx)-\phi(\rvx')\|d\,{T}(\rvx,\rvx')
\end{align}

Since \cref{apdx_eq:appliedbd} holds for any couplings ${T}\in {\mathcal H(\mu_{\mathcal{I}^{(\pi)}_{\mathrm{train}}}, \mu_{\mathcal{I}^{(\pi)}_{\mathrm{test}}})}$, we have:
\begin{align}\label{apdx_eq:ineqWD}
    &\mathbb{E}_{\rvx\sim\mu_{\mathcal{I}^{(\pi)}_{\mathrm{test}}}}[L_\gamma(\rho_f(\phi(\rvx),y))]
    - \mathbb{E}_{\rvx'\sim\mu_{\mathcal{I}^{(\pi)}_{\mathrm{train}}}}[L_\gamma(\rho_f(\phi(\rvx'),y'))] \nonumber\\
    &\le \inf_{{T}\in {\mathcal U}}\frac{M(f,\phi)}{\gamma}\,\int_{{T}\in{\mathcal U}}\|\phi(\rvx)-\phi(\rvx')\|d\,{T}(\rvx,\rvx') \nonumber \\
    &=\frac{M(f,\phi)}{\gamma}\, \mathcal W(\phi_{\#\mu_{\mathcal I_\mathrm{test}}},\phi_{\#\mu_{\mathcal I_\mathrm{train}}})
\end{align}

Putting together \cref{apdx:eq:marginloss_diff} and \cref{apdx_eq:ineqWD}, then
\begin{equation*}
    \frac{1}{u}\sum_{i\in\gI_{\mathrm{test}}^{(\pi)}}L_\gamma(\rho_f(\phi(\rvx_i),y_i))
    \leq\frac{1}{m}\sum_{i\in\gI_{\mathrm{train}}^{(\pi)}}L_\gamma(\rho_f(\phi(\rvx_j),y_j))
    + \frac{M(f,\phi)}{\gamma} \mathcal{W}\big(\phi_\#\mu_{\gI_{\rm{test}}^{(\pi)}},\,\phi_\#\mu_{\gI_{\rm{train}}^{(\pi)}}\big)\;.
\end{equation*}
Since $\mathbbm{1}_{u\leq0}\leq L_\gamma(u)\leq\mathbbm{1}_{u\leq\gamma}$ for all $u$, we have:
\begin{align*}
    \testr\leq&\frac{1}{u}\sum_{i\in\gI_{\mathrm{test}}^{(\pi)}}L_\gamma(\rho_f(\phi(\rvx_i),y_i))\\
    \leq&\frac{1}{m}\sum_{i\in\gI_{\mathrm{train}}^{(\pi)}}L_\gamma(\rho_f(\phi(\rvx_j),y_j))
    + \frac{M(f,\phi)}{\gamma} \mathcal{W}\big(\phi_\#\mu_{\gI_{\rm{test}}^{(\pi)}},\,\phi_\#\mu_{\gI_{\rm{train}}^{(\pi)}}\big)\;\\
    \leq&\trainr
    + \frac{M(f,\phi)}{\gamma} \mathcal{W}\big(\phi_\#\mu_{\gI_{\rm{test}}^{(\pi)}},\,\phi_\#\mu_{\gI_{\rm{train}}^{(\pi)}}\big)\;,
\end{align*}
which completes the proof.
\end{proof}

\subsection{Proof of the Class-wise Wasserstein Bound}\label{apdx_sec:proof of class-wise OT}
\classwisebound*


\begin{proof}
    We start from the same loss function from~\cref{eq:apdx:test_loss,eq:apdx:margin_loss} and introduce a margin loss $L_\gamma$ with $\gamma>0$ defined by $L_\gamma(u)=\mathbbm{1}_{u\leq0}+(1-\frac{u}{\gamma})\mathbbm{1}_{0<u\leq\gamma}$, which satisfies $\mathbbm{1}_{u\leq0}\leq L_\gamma(u)\leq\mathbbm{1}_{u\leq\gamma}$ for all $u$.
    By simplifying $\ell_{\gamma,f}(z,y)=L_\gamma(\rho_f(z,y))$, we have: 
    
    \begin{align}\label{apdx_eq:BasicGap}
        &\testr
        \leq \frac{1}{u}\sum_{i\in\testi}\ell_{\gamma,f}(\phi(\rvx_i),y_i)\nonumber\\
        =& \frac{1}{m}\sum_{{j}\in\traini}\ell_{\gamma,f}(\phi(\rvx_{j}),y_{j})
        +
        \frac{1}{u}\sum_{i\in\testi}\ell_{\gamma,f}(\phi(\rvx_i){,y_i}{})
        -
        \frac{1}{m}\sum_{{j}\in\traini}\ell_{\gamma,f}(\phi(\rvx_{j}){,y_{j}}{})\nonumber\\
        \leq& \frac{1}{m}\sum_{{j}\in\traini}\ell_{\gamma,f}(\phi(\rvx_{j}),y_{j})
        +\sup_{f\in\gF}\left(
        \frac{1}{u}\sum_{i\in\testi}\ell_{\gamma,f}(\phi(\rvx_i){,y_i}{})
        -
        \frac{1}{m}\sum_{{j}\in\traini}\ell_{\gamma,f}(\phi(\rvx_{j}){,y_{j}}{})
        \right)\nonumber\\
        \leq& \trainr 
        +\sup_{f\in\gF}\left(
        \frac{1}{u}\sum_{i\in\testi}\ell_{\gamma,f}(\phi(\rvx_i){,y_i}{})
        -
        \frac{1}{m}\sum_{{j}\in\traini}\ell_{\gamma,f}(\phi(\rvx_{j}){,y_{j}}{})
        \right)\;.
    \end{align}

    Define
    \begin{equation}
        \Delta^{(\pi)}(f)\coloneq 
        \frac{1}{u}\sum_{i\in\testi}\ell_{\gamma,f}(\phi(\rvx_i),y_i) 
        - \frac{1}{m}\sum_{i\in\traini}\ell_{\gamma,f}(\phi(\rvx_i),y_i)
    \end{equation}

    Then $\Delta^{(\pi)}(f)$ satisfies the assumption to apply the concentration inequality provided by \citet{el2009transductive} (See the~\cref{apdx_def:def-symmetry} and~\cref{apdx_lem:trans-mcdiarmid} for details),
    we have with probability at least $1-\delta$,
    \begin{equation}\label{apdx_eq:concentration}
    \sup_{f\in\gF}\left[\Delta^{(\pi)}(f)\right]
    \leq \mathbb E_{\pi'}\sup_{f\in\gF}\left[ \Delta^{(\pi')}(f)\right] 
    + \sqrt{\frac{m\,u\,(\frac{1}{m}+\frac{1}{u})^2}{2\,(m+u-\tfrac12)}\left(1-\frac{1}{2\max\{m,u\}}\right)^{-1}\ln\frac1\delta}\;.
    \end{equation}

    To better understand the role of class-wise Wasserstein distance, we will analyze the generalization error for each class individually.
    By decomposing the $\Delta^{(\pi')}(f)$ into classes and simplifying $m_c^{(\pi)}=|\gI_{\mathrm{train},c}^{(\pi)}|$ and $u_c^{(\pi)}=|\gI_{\mathrm{test},c}^{(\pi)}|$, we get:

    \begin{align}\label{apdx_eq:class-wise}
        \sup_{f\in\gF} \left[\Delta^{(\pi')}(f)\right]=
        &\sup_{f\in\gF}\Bigg[
        \frac{1}{u}\sum_{i\in\gI_{\mathrm{test}}^{(\pi')}}\ell_{\gamma,f}(\phi(\rvx_i),y_i) 
        - \frac{1}{m}\sum_{i\in\gI_{\mathrm{train}}^{(\pi')}}\ell_{\gamma,f}(\phi(\rvx_i),y_i)
        \Bigg]\nonumber\\
        =&\sup_{f\in\gF}\Bigg[
        \sum_{c=1}^K\underbrace{\frac{m_c^{(\pi')}}{m}\Big(\frac{1}{u_c^{(\pi')}}\sum_{i\in\gI_{\mathrm{test},c}^{(\pi')}}\ell_{\gamma,f}(\phi(\rvx_i),c)-\frac{1}{m_c^{(\pi')}}\sum_{i\in\gI_{\mathrm{train},c}^{(\pi')}}\ell_{\gamma,f}(\phi(\rvx_i),c)\Big)}_{\coloneq\Delta_c^{(\pi')}(f)}\nonumber\\
        &~~~~~~~~~~~~~~~~~~~~~~~~~~~~~~~~~~~~~~~~~~~~~~~
        +\sum_{c=1}^K(\frac{u_c^{(\pi')}}{u}-\frac{m_c^{(\pi')}}{m})\Big(\frac{1}{u_c^{(\pi')}}\sum_{i\in\gI_{\mathrm{test},c}^{(\pi')}}\ell_{\gamma,f}(\phi(\rvx_i),c)\Big)
        \Bigg]\nonumber\\
        \leq& \sup_{f\in\gF}\left(\sum_{c=1}^K \Delta_c^{(\pi')}(f)\right) + \sup_{f\in\gF}\Bigg[\sum_{c=1}^K(\frac{u_c^{(\pi')}}{u}-\frac{m_c^{(\pi')}}{m})\Big(\frac{1}{u_c^{(\pi')}}\sum_{i\in\gI_{\mathrm{test},c}^{(\pi')}}\ell_{\gamma,f}(\phi(\rvx_i),c)\Big)
        \Bigg] \nonumber\\
        \leq& \sup_{f\in\gF}\left(\sum_{c=1}^K \Delta_c^{(\pi')}(f)\right) + \sup_{f\in\gF}\Bigg[\sum_{c=1}^K\biggl|\frac{u_c^{(\pi')}}{u}-\frac{m_c^{(\pi')}}{m}\biggl|\Big(\frac{1}{u_c^{(\pi')}}\sum_{i\in\gI_{\mathrm{test},c}^{(\pi')}}\ell_{\gamma,f}(\phi(\rvx_i),c)\Big)\Bigg]\nonumber \\
        \leq& \sup_{f\in\gF}\left(\sum_{c=1}^K \Delta_c^{(\pi')}(f)\right) + \sum_{c=1}^K\biggl|\frac{u_c^{(\pi')}}{u}-\frac{m_c^{(\pi')}}{m}\biggl|
    \end{align}

    Then, we have:
    \begin{align}\label{apdx_eq:sumout}
        \mathbb E_{\pi'}\sup_{f\in\gF}\left[ \Delta^{(\pi')}(f)\right] 
        &\leq \mathbb E_{\pi'}\left[\,\sup_{f\in\gF}\sum_{c=1}^K\left(\Delta_c^{(\pi')}(f)\right) + \sum_{c=1}^K \,\biggl|\frac{u_c^{(\pi')}}{u}-\frac{m_c^{(\pi')}}{m}\biggl|\,\right] \nonumber\\
        &\leq \sum_{c=1}^K\, \mathbb E_{\pi'}\left[\,\sup_{f\in\gF}\left(\Delta_c^{(\pi')}(f)\right)\,\right]+\sum_{c=1}^K\, \mathbb E_{\pi'}\left[\,\biggl|\frac{u_c^{(\pi')}}{u}-\frac{m_c^{(\pi')}}{m}\biggl|\,\right]
    \end{align}

Define:
\begin{align*}
    M_c(f,\phi) &\coloneq \max_{i, j}\frac{|\rho_f(\phi(\rvx_i),c) - \rho_f(\phi(\rvx_j),c)|}{\|\phi(\rvx_i)-\phi(\rvx_j)\|} \quad \text{for}\ i\neq j\ \text{and}\ i,j\in\gI_{\mathrm{train},c}^{(\pi)}\cup\testi,  
\end{align*}
Since $L_\gamma$ is $\frac{1}{\gamma}$-Lipschitz and $|\rho_f(\phi(\rvx_i),c) - \rho_f(\phi(\rvx_j),c)|\leq M_c(f,\phi)\|\phi(\rvx_i)-\phi(\rvx_j)\|$, we can repeat the derivations in~\cref{apdx_eq:appliedbd}. Then we have:
        \begin{align}\label{apdx_eq:KRDual}
        \sup_{f\in\gF}[\Delta_c^{(\pi')}(f)] 
        =& \sup_{f\in\gF}\Bigg[
        \frac{m_c^{(\pi')}}{m}\Big(
        \mathbb{E}_{\rvx\sim\mu_{\gI_{\mathrm{test},c}^{(\pi')}}}\ell_{\gamma,f}(\phi(\rvx),c)
        -\mathbb{E}_{\rvx\sim\mu_{\gI_{\mathrm{train},c}^{(\pi')}}}\ell_{\gamma,f}(\phi(\rvx),c)
        \Big)
        \Bigg]\nonumber\\
        \leq& \frac{m_c^{(\pi)}}{m}\frac{M_c(f,\phi)}{\gamma}\gW(\phi_\#\mu_{\gI_{\mathrm{test},c}^{(\pi')}},\phi_\#\mu_{\gI_{\mathrm{train},c}^{(\pi')}})\;.
    \end{align}

    Putting together \cref{apdx_eq:class-wise,apdx_eq:concentration,apdx_eq:sumout,apdx_eq:KRDual}, with probability at least $1-\delta$, we have:
    \begin{align}\label{apdx_eq:puttogether}
        &\sup_{f\in\gF}\left[
        \frac{1}{u}\sum_{i\in\testi}\ell_{\gamma,f}(\phi(\rvx_i),y_i)
        -
        \frac{1}{m}\sum_{j\in\traini}\ell_{\gamma,f}(\phi(\rvx_j),y_j)
        \right]\nonumber\\
        &\leq
        \sum_{c=1}^K \mathbb{E}_{\pi'}\frac{m_c^{(\pi)}}{m}\frac{M_c(f,\phi)}{\gamma}\gW(\phi_\#\mu_{\gI_{\mathrm{test},c}^{(\pi)}},\phi_\#\mu_{\gI_{\mathrm{train},c}^{(\pi)}})
        +\mathbb E_{\pi'}\left[\,\sum_{c=1}^K\biggl|\frac{u_c^{(\pi')}}{u}-\frac{m_c^{(\pi')}}{m}\biggl|\,\right]+\varepsilon_\delta\;,
    \end{align}
    where $\varepsilon_\delta=\sqrt{\frac{m\,u\,(\frac{1}{m}+\frac{1}{u})^2}{2\,(m+u-\tfrac12)}\left(1-\frac{1}{2\max\{m,u\}}\right)^{-1}\ln\frac1\delta}$.
    Plugging \cref{apdx_eq:puttogether} into \cref{apdx_eq:BasicGap}, we obtain the bound stated in the theorem, which completes the proof.
\end{proof}


  
    \begin{definition}[$(m,u)$-permutation symmetry \citep{el2009transductive}]\label{apdx_def:def-symmetry}
    Let $[m+u]=\{1,\dots, m, m+1, \dots, m+u\}$ and $[m+1,m+u] = \{m+1,\dots, m+u\}$, and $S_{m+u}$ be the set of permutation functions on $[m+u]$.
    Define the block-preserving permutation function.
    \[
    H_{m,u}\;=\;\bigl\{\sigma\in S_{m+u}\,:\,\sigma([m])=[m]\ \text{and}\ \sigma([m+1,m+u])=[m+1,m+u]\bigr\}.
    \]
    A function $g:S_{m+u}\to\mathbb R$ is called \emph{$(m,u)$-permutation symmetric} if
    \[
    g(\pi)\;=\;g(\sigma\circ \pi)\qquad\text{for all }\pi\in S_{m+u}\ \text{and all }\sigma\in H_{m,u}.
    \]
    Equivalently, $g(\pi)$ depends only on the unordered split $\bigl(\pi([m]),\,\pi([m+1,m+u])\bigr)$, i.e., the training/test partition, and not on the ordering within each block.
    \end{definition}
    
    \begin{lemma}[Concentration Inequality on Transductive Setting~\citep{el2009transductive}]\label{apdx_lem:trans-mcdiarmid}
    Let $S_{m+u}$  be the set of permutation functions on $[m+u]=\{1,\dots,m+u\}$.
    Let $\pi\in S_{m+u}$ and $g:S_{m+u}\to\mathbb R$ be $(m,u)$-permutation symmetric.
    For $i\in\{1,\dots,m\}$ and $j\in\{m+1,\dots,m+u\}$, let $\tau_{ij}\in S_{m+u}$ be the transposition of positions $i$ and $j$,
    and write $\pi^{(ij)}\coloneq\tau_{ij}\circ\pi$.
    If for some $\beta>0$,
    \[
    \bigl|\,g(\pi)-g(\pi^{(ij)})\,\bigr|\;\le\;\beta
    \qquad\text{for all }i\le m<j,
    \]
    then for every $\varepsilon>0$,
    \[
    \Pr\!\Big\{\,g(\pi)-\mathbb E_{\pi'}[g(\pi')]\ge \varepsilon\,\Big\}
    \;\le\;
    \exp\!\left(
    -\frac{2\,\varepsilon^2\,\bigl(m+u-\tfrac12\bigr)}{m\,u\,\beta^2\left(1-\tfrac{1}{2\max\{m,u\}}\right)}
    \right),
    \]
    where $\pi'\in \mathrm S_{m+u}$ is an independent to $\pi$.
    Equivalently, with probability at least $1-\delta$,
    \[
    g(\pi)\;\le\;\mathbb E_{\pi'}[g(\pi')]\;+\;\varepsilon_\delta,
    \qquad
    \varepsilon_\delta
    =
    \sqrt{\frac{m\,u\,\beta^2}{2\,(m+u-\tfrac12)}\left(1-\tfrac{1}{2\max\{m,u\}}\right)^{-1}\!\ln\frac1\delta}\,.
    \]
    \end{lemma}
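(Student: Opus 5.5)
The plan is to use the martingale (Doob) method for sampling without replacement, as in \citet{el2009transductive}.

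\textbf{Reduction.} By $(m,u)$-permutation symmetry, $g(\pi)$ depends only on the unordered training set $\{\pi(1),\dots,\pi(m)\}$. So I will regard $g$ as a function of the first $m$ draws $Z_1=\pi(1),\dots,Z_m=\pi(m)$ taken uniformly without replacement from $[m+u]$. If $u<m$, I will instead swap the roles of the two blocks and reveal the $u$ test positions. This is why $\max\{m,u\}$ appears, and it lets me assume $m\le u$ without loss of generality.

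\textbf{Martingale differences.} I will define the Doob martingale
\[
V_k=\mathbb E[g(\pi)\mid Z_1,\dots,Z_k],\qquad k=0,\dots,m,
\]
so that $V_0=\mathbb E_{\pi'}g(\pi')$ and $V_m=g(\pi)$. The key step is to bound the conditional range of the increment $V_k-V_{k-1}$. Fix the prefix $Z_1,\dots,Z_{k-1}$ and compare two candidate values $a\neq b$ for $Z_k$. I will couple the two conditional distributions through the transposition that exchanges $a$ and $b$ in the remaining random arrangement.

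Under this coupling the two arrangements agree unless $b$ lands in the training block in the first arrangement; when they differ, they differ by a single train/test transposition. Among the $m+u-k$ remaining positions, $b$ ends up in the test block with probability $u/(m+u-k)$. The transposition case therefore contributes at most $\beta\cdot u/(m+u-k)$, and this bounds the range of the increment:
\[
\sup_a V_k-\inf_b V_k\le c_k\coloneqq \beta\,\frac{u}{m+u-k}.
\]

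\textbf{Hoeffding--Azuma.} Applying Hoeffding's lemma to each increment and taking the product of conditional moment generating functions gives
\[
\Pr\{V_m-V_0\ge\varepsilon\}\le \exp\!\big(-2\varepsilon^2/\textstyle\sum_{k=1}^m c_k^2\big).
\]
It then remains to estimate
\[
\sum_{k=1}^m c_k^2=\beta^2u^2\sum_{j=u}^{m+u-1}\frac{1}{j^2}.
\]

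\textbf{Main obstacle.} I expect the hard part to be this last step: bounding the sum sharply enough to reach exactly
\[
\frac{mu\,\beta^2\big(1-\frac{1}{2\max\{m,u\}}\big)}{m+u-\frac12}.
\]
My first attempt will use the convexity/midpoint bound
\[
\frac{1}{j^2}\le\int_{j-1/2}^{j+1/2}\frac{dt}{t^2}.
\]
This yields
\[
\sum_{j=u}^{m+u-1}\frac{1}{j^2}\le \frac{1}{u-\frac12}-\frac{1}{m+u-\frac12}=\frac{m}{(u-\frac12)(m+u-\frac12)}.
\]
Multiplying by $u^2\beta^2$ gives
\[
\frac{mu\beta^2}{m+u-\frac12}\cdot\frac{u}{u-\frac12}
=\frac{mu\beta^2}{m+u-\frac12}\Big(1-\frac{1}{2u}\Big)^{-1}.
\]
The correction factor therefore appears as $(1-1/(2\max\{m,u\}))^{-1}$ rather than as $(1-1/(2\max\{m,u\}))$. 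This inverse form is exactly what the high-probability form of $\varepsilon_\delta$ in the lemma uses. For the exponential tail stated in the lemma I would need to double-check which direction of the factor is claimed and, if necessary, refine the integral comparison.

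Finally, setting the tail bound equal to $\delta$ and solving for $\varepsilon$ gives $\varepsilon_\delta$.
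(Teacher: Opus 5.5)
Your argument is correct and is essentially the proof El-Yaniv and Pechyony give for their Lemma~2. The paper states this lemma with that citation and gives no proof of its own. The ingredients match: a Doob martingale over the smaller block, a swap coupling that bounds the conditional increment ranges by $\beta\max\{m,u\}/(m+u-k)$, Hoeffding--Azuma, and the midpoint bound $j^{-2}\le\int_{j-1/2}^{j+1/2}t^{-2}\,dt$.

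One slip in the coupling step: the two coupled splits coincide when $b$ lands in the \emph{training} block. They differ by a single train/test transposition when $b$ lands in the \emph{test} block. You wrote the reverse, though the probability $u/(m+u-k)$ you then use is the correct one.

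On the point you left open, do not try to refine the integral comparison. The first display of the statement has the factor $1-\frac{1}{2\max\{m,u\}}$ on the wrong side of the fraction, is false as written, and no argument can reach it. Take $m=u=1$ with $g(\pi)=\beta$ if $\pi(1)=1$ and $g(\pi)=0$ otherwise. Every hypothesis holds and $\Pr\{g(\pi)-\mathbb E_{\pi'}[g(\pi')]\ge\beta/2\}=1/2$, while the displayed tail gives $e^{-3/2}<1/2$.

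The tail you derived,
\[
\exp\Bigl(-\frac{2\varepsilon^2(m+u-\frac12)}{mu\beta^2}\Bigl(1-\frac{1}{2\max\{m,u\}}\Bigr)\Bigr),
\]
is the form in El-Yaniv and Pechyony, and it gives $e^{-3/8}\ge 1/2$ in this example. Setting it equal to $\delta$ and solving for $\varepsilon$ gives exactly the stated $\varepsilon_\delta$. The misprinted display would instead give $\varepsilon_\delta$ with $1-\frac{1}{2\max\{m,u\}}$ to the first power, so the two displays in the statement are not equivalent. Only the $\varepsilon_\delta$ form is used in the proof of Theorem~\ref{thm:classwise-ot}, so your proof covers what the paper actually needs.
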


\subsection{Proof of the Wasserstein Distance on SGC}\label{apdx_sec:Pf-W1SGC}
\sgcdepth*


\begin{proof}
Fix $\ell\in\mathbb N$ and denote the depth-$\ell$ SGC embedding by
\begin{equation*}\label{eq:sgc-z-def}
\mathbf z_i^{(\ell)} \coloneq \phi^{(\ell)}(x_i;\mathbf X,\mathbf A)
= (\hat{\mathbf A}^{\ell}\mathbf X)_{i\cdot}\in\mathbb R^{F}.
\end{equation*}

Let $\mathbf u_1\coloneq\tilde{\mathbf D}^{1/2}\mathbf 1$. A direct calculation gives $\hat{\mathbf A}\mathbf u_1=\mathbf u_1$:
\begin{equation*}\label{eq:sgc-u1-eig}
\hat{\mathbf A}\mathbf u_1
=\tilde{\mathbf D}^{-1/2}\tilde{\mathbf A}\tilde{\mathbf D}^{-1/2}\tilde{\mathbf D}^{1/2}\mathbf 1
=\tilde{\mathbf D}^{-1/2}\tilde{\mathbf A}\mathbf 1
=\tilde{\mathbf D}^{-1/2}\tilde{\mathbf D}\mathbf 1
=\tilde{\mathbf D}^{1/2}\mathbf 1
=\mathbf u_1.
\end{equation*}
Define $U\coloneq\mathrm{span}\{\mathbf u_1\}$ and the orthogonal projectors
\begin{equation}\label{eq:sgc-projectors}
\mathbf P_U\coloneq\frac{\mathbf u_1\mathbf u_1^\top}{\|\mathbf u_1\|_2^2},
\qquad
\mathbf P_{U^\perp}\coloneq\mathbf I-\mathbf P_U.
\end{equation}
Decompose the input features as
\begin{equation*}\label{eq:sgc-X-decomp}
\mathbf X=\mathbf X_U+\mathbf X_{U^\perp},
\qquad
\mathbf X_U\coloneq\mathbf P_U\mathbf X,\quad \mathbf X_{U^\perp}\coloneq\mathbf P_{U^\perp}\mathbf X.
\end{equation*}
Since $\hat{\mathbf A}\mathbf u_1=\mathbf u_1$, we have $\hat{\mathbf A}^{\ell}\mathbf X_U=\mathbf X_U$, hence
\begin{equation}\label{eq:sgc-X-split}
\hat{\mathbf A}^{\ell}\mathbf X=\hat{\mathbf A}^{\ell}\mathbf X_U+\hat{\mathbf A}^{\ell}\mathbf X_{U^\perp}
=\mathbf X_U+\hat{\mathbf A}^{\ell}\mathbf X_{U^\perp}.
\end{equation}

We first control the $U^\perp$ component via spectral contraction.
Because the graph is undirected, $\hat{\mathbf A}$ is symmetric and admits an orthonormal eigenbasis
$\{\mathbf v_k\}_{k=1}^N$ with $\hat{\mathbf A}\mathbf v_k=\lambda_k(\hat{\mathbf A})\mathbf v_k$.
Choose $\mathbf v_1=\mathbf u_1/\|\mathbf u_1\|_2$ and define
$\rho_\perp(\hat{\mathbf A})\coloneq\max_{k\ge2}|\lambda_k(\hat{\mathbf A})|$.
For any $\mathbf h\in U^\perp$, we can write $\mathbf h=\sum_{k=2}^N \alpha_k\mathbf v_k$, and thus
\begin{equation}\label{eq:sgc-spectral-contraction}
\|\hat{\mathbf A}\mathbf h\|_2^2
=
\Big\|\sum_{k=2}^N \alpha_k\lambda_k(\hat{\mathbf A})\mathbf v_k\Big\|_2^2
=
\sum_{k=2}^N \alpha_k^2\lambda_k(\hat{\mathbf A})^2
\le
\rho_\perp(\hat{\mathbf A})^2\sum_{k=2}^N \alpha_k^2
=
\rho_\perp(\hat{\mathbf A})^2\|\mathbf h\|_2^2,
\end{equation}
so $\|\hat{\mathbf A}\mathbf h\|_2\le \rho_\perp(\hat{\mathbf A})\|\mathbf h\|_2$ for all $\mathbf h\in U^\perp$.
Iterating yields $\|\hat{\mathbf A}^{\ell}\mathbf h\|_2\le \rho_\perp(\hat{\mathbf A})^{\ell}\|\mathbf h\|_2$,
and applying this column-wise gives
\begin{equation}
\label{eq:sgc-Uperp}
\|\hat{\mathbf A}^{\ell}\mathbf X_{U^\perp}\|_F
\le
\rho_\perp(\hat{\mathbf A})^{\ell}\|\mathbf X_{U^\perp}\|_F
=
\rho_\perp(\hat{\mathbf A})^{\ell}\|\mathbf P_{U^\perp}\mathbf X\|_F.
\end{equation}

Next, since $\mathbf P_U$ has rank one, there exists a vector $\mathbf a\in\mathbb R^{F}$ such that
\begin{equation}\label{eq:sgc-XU-a}
\mathbf X_U=\mathbf u_1\mathbf a^\top,
\qquad
\mathbf a=\frac{1}{\|\mathbf u_1\|_2^2}\mathbf X^\top\mathbf u_1.
\end{equation}
Then
\begin{equation}\label{eq:sgc-a-bound}
\|\mathbf a\|_2
=
\frac{\|\mathbf X^\top\mathbf u_1\|_2}{\|\mathbf u_1\|_2^2}
\le
\frac{\|\mathbf X\|_F\|\mathbf u_1\|_2}{\|\mathbf u_1\|_2^2}
=
\frac{\|\mathbf X\|_F}{\|\mathbf u_1\|_2}.
\end{equation}

Now note that $(\mathbf X_U)_{i\cdot}=\mathbf u_1(i)\mathbf a^\top$.
Recalling that $d(x_i)=\sqrt{\tilde d_i}=\mathbf u_1(i)$, we obtain for any $i,j$,
\begin{align}
\|\mathbf z_i^{(\ell)}-\mathbf z_j^{(\ell)}\|_2
&=
\big\|(\mathbf X_U)_{i\cdot}-(\mathbf X_U)_{j\cdot}
+\big(\hat{\mathbf A}^{\ell}\mathbf X_{U^\perp}\big)_{i\cdot}
-\big(\hat{\mathbf A}^{\ell}\mathbf X_{U^\perp}\big)_{j\cdot}\big\|_2 \notag \\
&\le
|d(x_i)-d(x_j)|\,\|\mathbf a\|_2
+\big\|\big(\hat{\mathbf A}^{\ell}\mathbf X_{U^\perp}\big)_{i\cdot}\big\|_2
+\big\|\big(\hat{\mathbf A}^{\ell}\mathbf X_{U^\perp}\big)_{j\cdot}\big\|_2 \notag \\
&\le
\frac{\|\mathbf X\|_F}{\|\mathbf u_1\|_2}\,|d(x_i)-d(x_j)|
+\|\hat{\mathbf A}^{\ell}\mathbf X_{U^\perp}\|_F \notag \\
&\le
\frac{\|\mathbf X\|_F}{\|\mathbf u_1\|_2}\,|d(x_i)-d(x_j)|
+\|\mathbf P_{U^\perp}\mathbf X\|_F\,\rho_\perp(\hat{\mathbf A})^{\ell},
\label{eq:sgc-pointwise-z}
\end{align}

Let $\mathbf \Gamma$ be any coupling between $\mu_S$ and $\mu_T$ (i.e., $\Gamma_{ij}\ge 0$ and
$\sum_{j\in T}\Gamma_{ij}=\frac{1}{|S|}$ for $i\in S$, $\sum_{i\in S}\Gamma_{ij}=\frac{1}{|T|}$ for $j\in T$). By definition of the empirical $1$-Wasserstein distance,
\begin{equation}\label{eq:sgc-W1-def}
\mathcal W\!\big(\phi^{(\ell)}_\#\mu_{\mathcal S},\,\phi^{(\ell)}_\#\mu_{\mathcal T}\big)
=
\inf_{\mathbf \Gamma}\sum_{i\in \mathcal S}\sum_{j\in \mathcal T}
\Gamma_{ij}\,\|\mathbf z_i^{(\ell)}-\mathbf z_j^{(\ell)}\|_2.
\end{equation}
Applying the pointwise estimate inside the transport cost and using $\sum_{i,j}\Gamma_{ij}=1$ yields
\begin{align}
\sum_{i,j}\Gamma_{ij}\,\|\mathbf z_i^{(\ell)}-\mathbf z_j^{(\ell)}\|_2
&\le
\frac{\|\mathbf X\|_F}{\|\mathbf u_1\|_2}\sum_{i,j}\Gamma_{ij}|d(x_i)-d(x_j)|
+\|\mathbf P_{U^\perp}\mathbf X\|_F\,\rho_\perp(\hat{\mathbf A})^{\ell}.
\label{eq:sgc-cost-bound}
\end{align}
Taking the infimum over $\mathbf \Gamma$ gives
\begin{equation}\label{eq:sgc-W1-final}
\mathcal W\!\big(\phi^{(\ell)}_\#\mu_{\mathcal S},\,\phi^{(\ell)}_\#\mu_{\mathcal T}\big)
\le
\Big(\frac{\|\mathbf X\|_F}{\|\mathbf u_1\|_2}\Big)\,
\mathcal W\!\big(d_\#\mu_{\mathcal S},\,d_\#\mu_{\mathcal T}\big)
+
\Big(\|\mathbf P_{U^\perp}\mathbf X\|_F\Big)\rho_\perp(\hat{\mathbf A})^{\ell},
\end{equation}
which proves \eqref{eq:w1-sgc}. In particular, we take
$C_{1}\coloneq \|\mathbf X\|_F/\|\mathbf u_1\|_2$ and
$C_{2}\coloneq\|\mathbf P_{U^\perp}\mathbf X\|_F$.
\end{proof}

\subsection{Proof of the Wasserstein Distance on GCN}\label{apdx_sec:Pf-W1GCN}
\gcndepth*

\begin{proof}
Recall from the proof of \cref{prop:w1-sgc} that $\mathbf u_1\coloneq\tilde{\mathbf D}^{1/2}\mathbf 1$ satisfies
$\hat{\mathbf A}\mathbf u_1=\mathbf u_1$, and let $U\coloneq\mathrm{span}\{\mathbf u_1\}$ with orthogonal projectors
\begin{equation*}\label{eq:gcn-projectors}
\mathbf P_U=\frac{\mathbf u_1\mathbf u_1^\top}{\|\mathbf u_1\|_2^2},
\qquad
\mathbf P_{U^\perp}=\mathbf I-\mathbf P_U.
\end{equation*}
Write the decomposition
\begin{equation*}\label{eq:gcn-X-decomp}
\mathbf X_U^{(\ell)}\coloneq\mathbf P_U \mathbf X^{(\ell)},\qquad
\mathbf X_{U^\perp}^{(\ell)}\coloneq\mathbf P_{U^\perp} \mathbf X^{(\ell)},\qquad
\mathbf X^{(\ell)}=\mathbf X_U^{(\ell)}+\mathbf X_{U^\perp}^{(\ell)}.
\end{equation*}

We first control the growth of $\|\mathbf X^{(\ell)}\|_F$.
Since $\sigma=\mathrm{ReLU}$ is $1$-Lipschitz elementwise and $|\mathrm{ReLU}(a)|\le |a|$,
\begin{equation}\label{eq:gcn-growth-step}
\|\mathbf X^{(\ell+1)}\|_F
=\|\sigma(\hat{\mathbf A}\,\mathbf X^{(\ell)}\mathbf W^{(\ell)})\|_F
\le \|\hat{\mathbf A}\,\mathbf X^{(\ell)}\mathbf W^{(\ell)}\|_F
\le \|\hat{\mathbf A}\|_2 \|\mathbf X^{(\ell)}\|_F \|\mathbf W^{(\ell)}\|_2
\le \beta \|\mathbf X^{(\ell)}\|_F,
\end{equation}
where $\|\hat{\mathbf A}\|_2=1$. Iterating yields
\begin{equation}
\label{eq:gcn-growth}
\|\mathbf X^{(\ell)}\|_F \le \beta^\ell \|\mathbf X\|_F.
\end{equation}

Next we note that one GCN layer maps $U$ into itself.
If $\mathbf Y\in U$, then $\mathbf Y=\mathbf u_1 \mathbf a^\top$ for some vector $\mathbf a$ and
\begin{equation}\label{eq:gcn-U-map}
\hat{\mathbf A}\,\mathbf Y\, \mathbf W^{(\ell)}
= \hat{\mathbf A}\,\mathbf u_1 \mathbf a^\top \mathbf W^{(\ell)}
= \mathbf u_1(\mathbf a^\top \mathbf W^{(\ell)}).
\end{equation}
Since $\mathbf u_1(i)\ge 0$ for all $i$ and ReLU is positively homogeneous on $\mathbb R_+$,
$\mathrm{ReLU}(\mathbf u_1(i)\mathbf b)=\mathbf u_1(i)\mathrm{ReLU}(\mathbf b)$, hence
$\sigma(\hat{\mathbf A}\,\mathbf Y\,\mathbf W^{(\ell)})\in U$ and therefore
\begin{equation}
\label{eq:gcn-U-inv}
\mathbf P_{U^\perp}\sigma(\hat{\mathbf A}\,\mathbf X_U^{(\ell)}\mathbf W^{(\ell)})=0.
\end{equation}

We now bound the $U^\perp$ component.
Using \eqref{eq:gcn-U-inv} and $\|\mathbf P_{U^\perp}\|_2=1$,
\begin{equation}\label{eq:gcn-Uperp-step1}
\|\mathbf X_{U^\perp}^{(\ell+1)}\|_F
=\Big\|\mathbf P_{U^\perp}\big(\sigma(\hat{\mathbf A}\,\mathbf X^{(\ell)}\mathbf W^{(\ell)})
-\sigma(\hat{\mathbf A}\,\mathbf X_U^{(\ell)}\mathbf W^{(\ell)})\big)\Big\|_F
\le \|\sigma(\hat{\mathbf A}\,\mathbf X^{(\ell)}\mathbf W^{(\ell)})
-\sigma(\hat{\mathbf A}\,\mathbf X_U^{(\ell)}\mathbf W^{(\ell)})\|_F.
\end{equation}
By $1$-Lipschitzness of $\sigma$,
\begin{equation}\label{eq:gcn-Uperp-step2}
\|\mathbf X_{U^\perp}^{(\ell+1)}\|_F
\le \|\hat{\mathbf A}(\mathbf X^{(\ell)}-\mathbf X_U^{(\ell)})\mathbf W^{(\ell)}\|_F
= \|\hat{\mathbf A}\,\mathbf X_{U^\perp}^{(\ell)}\mathbf W^{(\ell)}\|_F
\le \|\mathbf W^{(\ell)}\|_2\ \|\hat{\mathbf A}\,\mathbf X_{U^\perp}^{(\ell)}\|_F.
\end{equation}

To control $\|\hat{\mathbf A}\,\mathbf X_{U^\perp}^{(\ell)}\|_F$, use that $\hat{\mathbf A}$ is symmetric, hence admits an orthonormal eigenbasis
$\{\mathbf v_k\}_{k=1}^N$ with $\hat{\mathbf A}\mathbf v_k = \lambda_k(\hat{\mathbf A})\mathbf v_k$ and $\mathbf v_1=\mathbf u_1/\|\mathbf u_1\|_2$.
For any $\mathbf h\in U^\perp$ we have $\mathbf h=\sum_{k=2}^N \alpha_k \mathbf v_k$ and therefore
\begin{equation}\label{eq:gcn-spectral-contraction}
\|\hat{\mathbf A}\mathbf h\|_2^2
=\Big\|\sum_{k=2}^N \alpha_k \lambda_k(\hat{\mathbf A})\mathbf v_k\Big\|_2^2
=\sum_{k=2}^N \alpha_k^2 \lambda_k(\hat{\mathbf A})^2
\le \rho_\perp(\hat{\mathbf A})^2 \sum_{k=2}^N \alpha_k^2
= \rho_\perp(\hat{\mathbf A})^2 \|\mathbf h\|_2^2,
\end{equation}
which gives $\|\hat{\mathbf A}\mathbf h\|_2 \le \rho_\perp(\hat{\mathbf A})\|\mathbf h\|_2$ for all $\mathbf h\in U^\perp$.
Applying this columnwise to $\mathbf X_{U^\perp}^{(\ell)}$ yields
\begin{equation}\label{eq:gcn-AUperp-F}
\|\hat{\mathbf A}\,\mathbf X_{U^\perp}^{(\ell)}\|_F \le \rho_\perp(\hat{\mathbf A})\|\mathbf X_{U^\perp}^{(\ell)}\|_F,
\end{equation}
and thus
\begin{equation}\label{eq:gcn-Uperp-rec}
\|\mathbf X_{U^\perp}^{(\ell+1)}\|_F \le \beta\rho_\perp(\hat{\mathbf A})\,\|\mathbf X_{U^\perp}^{(\ell)}\|_F.
\end{equation}
Iterating gives
\begin{equation}
\label{eq:gcn-Uperp}
\|\mathbf X_{U^\perp}^{(\ell)}\|_F \le (\beta\rho_\perp(\hat{\mathbf A}))^\ell \|\mathbf P_{U^\perp}\mathbf X\|_F.
\end{equation}

Since $\mathbf P_U$ has rank one, we can write $\mathbf X_U^{(\ell)}=\mathbf u_1 \mathbf a^{(\ell)\top}$ with
\begin{equation}\label{eq:gcn-a-def}
\mathbf a^{(\ell)}=\frac{1}{\|\mathbf u_1\|_2^2}\,\mathbf X^{(\ell)\top}\mathbf u_1.
\end{equation}
Then
\begin{equation}\label{eq:gcn-a-bound}
\|\mathbf a^{(\ell)}\|_2
=\frac{\|\mathbf X^{(\ell)\top}\mathbf u_1\|_2}{\|\mathbf u_1\|_2^2}
\le \frac{\|\mathbf X^{(\ell)}\|_F\|\mathbf u_1\|_2}{\|\mathbf u_1\|_2^2}
= \frac{\|\mathbf X^{(\ell)}\|_F}{\|\mathbf u_1\|_2}
\le \beta^\ell \frac{\|\mathbf X\|_F}{\|\mathbf u_1\|_2},
\end{equation}
where we used \eqref{eq:gcn-growth}.

Now let $\mathbf z_i^{(\ell)}\coloneq(\mathbf X^{(\ell)})_{i\cdot}
=\phi_{\mathrm{GCN}}^{(\ell)}(x_i;\mathbf X,\mathbf A)$.
Because $(\mathbf X_U^{(\ell)})_{i\cdot}=\mathbf u_1(i)\,\mathbf a^{(\ell)\top}
=d(x_i)\,\mathbf a^{(\ell)\top}$, for any $i,j$,
\begin{equation}\label{eq:gcn-pointwise-pre}
\|\mathbf z_i^{(\ell)}-\mathbf z_j^{(\ell)}\|_2
\le |d(x_i)-d(x_j)|\,\|\mathbf a^{(\ell)}\|_2
+ \|(\mathbf X_{U^\perp}^{(\ell)})_{i\cdot}\|_2 + \|(\mathbf X_{U^\perp}^{(\ell)})_{j\cdot}\|_2
\le |d(x_i)-d(x_j)|\,\|\mathbf a^{(\ell)}\|_2 + \|\mathbf X_{U^\perp}^{(\ell)}\|_F.
\end{equation}
Combining with the bounds above and \eqref{eq:gcn-Uperp} yields the pointwise estimate
\begin{equation}\label{eq:gcn-pointwise}
\|\mathbf z_i^{(\ell)}-\mathbf z_j^{(\ell)}\|_2
\le
\Big(\beta^\ell\frac{\|\mathbf X\|_F}{\|\mathbf u_1\|_2}\Big)\,|d(x_i)-d(x_j)|
+
\Big(\|\mathbf P_{U^\perp}\mathbf X\|_F\Big)\,(\beta\rho_\perp(\hat{\mathbf A}))^\ell.
\end{equation}

Let $\mathbf \Gamma$ be any coupling between $\mu_S$ and $\mu_T$.
By definition of the empirical $1$-Wasserstein distance,
\begin{equation}\label{eq:gcn-W1-def}
W\!\Big( \phi^{(\ell)}_\#\mu_S,\ \phi^{(\ell)}_\#\mu_T \Big)
=
\inf_{\mathbf \Gamma}\sum_{i\in S}\sum_{j\in T}\Gamma_{ij}\,\|\mathbf z_i^{(\ell)}-\mathbf z_j^{(\ell)}\|_2.
\end{equation}
Applying the pointwise estimate inside the transport cost and using $\sum_{i,j}\Gamma_{ij}=1$ gives
\begin{align}
\sum_{i,j}\Gamma_{ij}\,\|\mathbf z_i^{(\ell)}-\mathbf z_j^{(\ell)}\|_2
&\le
\Big(\beta^\ell\frac{\|\mathbf X\|_F}{\|\mathbf u_1\|_2}\Big)\sum_{i,j}\Gamma_{ij}|d(x_i)-d(x_j)|
+\Big(\|\mathbf P_{U^\perp}\mathbf X\|_F\Big)(\beta\rho_\perp(\hat{\mathbf A}))^\ell \sum_{i,j}\Gamma_{ij} \notag \\
&=
\Big(\beta^\ell\frac{\|\mathbf X\|_F}{\|\mathbf u_1\|_2}\Big)\sum_{i,j}\Gamma_{ij}|d(x_i)-d(x_j)|
+\Big(\|\mathbf P_{U^\perp}\mathbf X\|_F\Big)(\beta\rho_\perp(\hat{\mathbf A}))^\ell .
\label{eq:gcn-cost-bound}
\end{align}
Taking the infimum over $\mathbf \Gamma$ yields
\begin{equation}\label{eq:gcn-W1-final}
W\!\Big( \phi^{(\ell)}_\#\mu_S,\ \phi^{(\ell)}_\#\mu_T \Big)
\le
\left[\Big(\frac{\|\mathbf X\|_F}{\|\mathbf u_1\|_2}\Big)\,
W\!\big(d_\#\mu_S,\ d_\#\mu_T\big)
+
\Big(\|\mathbf P_{U^\perp}\mathbf X\|_F\Big)\,(\rho_\perp(\hat{\mathbf A}))^\ell\right]\beta^\ell.
\end{equation}
Thus the theorem holds with $C_1\coloneq\|\mathbf X\|_F/\|\mathbf u_1\|_2$ and
$C_2\coloneq\|\mathbf P_{U^\perp}\mathbf X\|_F$.
\end{proof}
\newpage

\section{Rank Correlation on GraphSAGE}\label{apdx:Full_result}
\begin{figure*}[h] 
    \centering
    \includegraphics[width=0.7\linewidth]{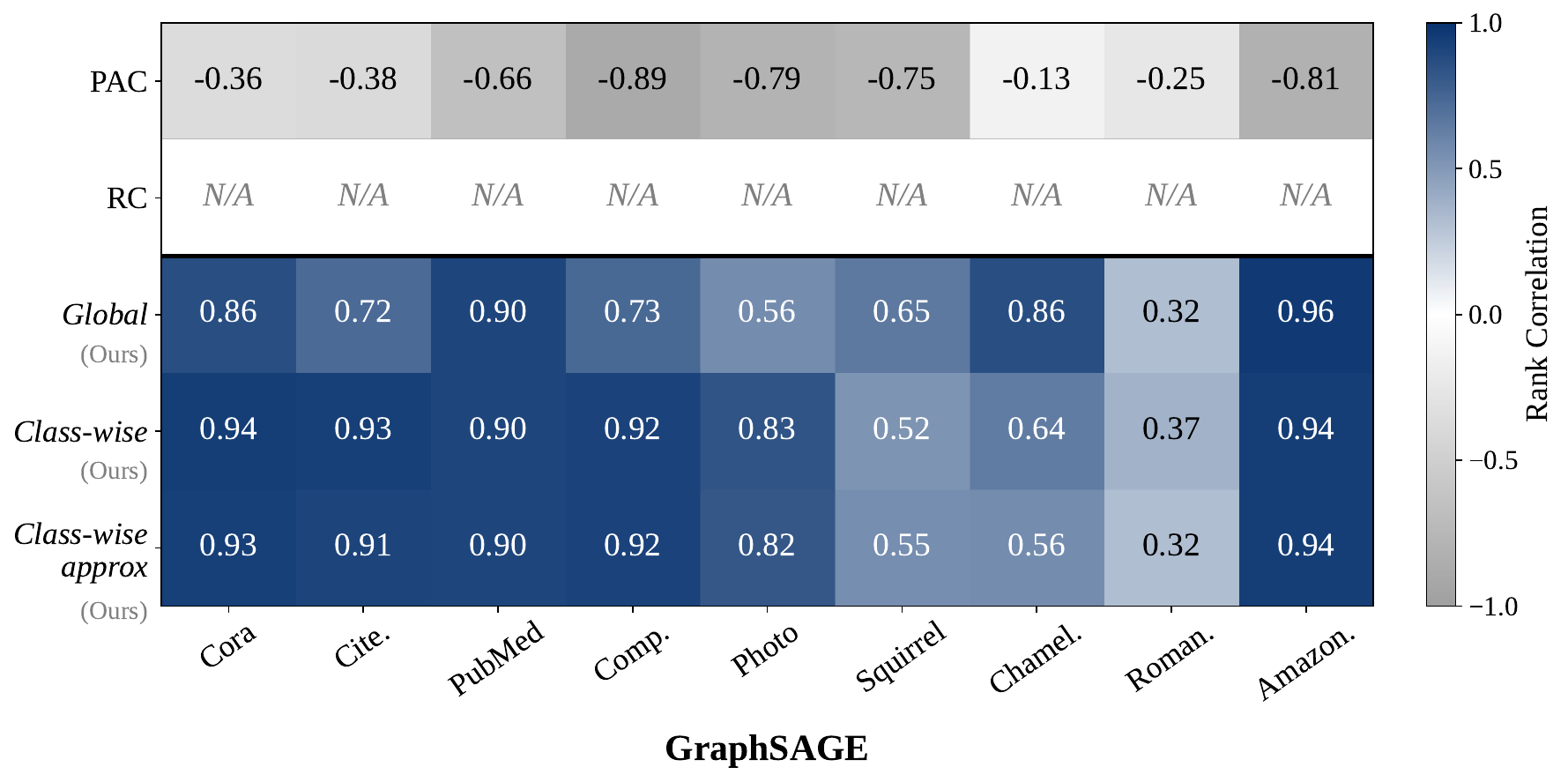}
    \caption{Rank correlation between generalization bounds and empirical error gap across nine datasets and GraphSAGE. Darker blue indicates stronger positive correlation. N/A indicates the bound cannot be computed.}
    \label{apdx:result_of_graphsage}
\end{figure*}

\cref{apdx:result_of_graphsage} shows the rank correlation of our bounds and baselines. \emph{Global} corresponds to our bound from \cref{thm:global-ot}. \emph{Class-wise} and {\emph{Class-wise approx}} correspond to the class-wise bound in \cref{thm:classwise-ot} with and without test labels, respectively. Similar to~\cref{fig:main_result}, our bounds consistently achieve high correlations, better tracking the true generalization performance, while PAC and RC bounds show weak or negative correlations in most cases.


\section{Dataset Statistics}\label{apdx:datastat}
\label{appendix:dataset_statistics}
We provide detailed statistics and explanations about the datasets used for the experiments in \Cref{tab:appendix_dataset_statistics}.
\begin{table*}[ht]
\begin{center}
\caption{Statistics of the datasets utilized in the experiments.}
\resizebox{0.6\linewidth}{!}{
\setlength{\tabcolsep}{10pt}
\begin{tabular}{crrrrr}
\toprule
Dataset & \# nodes & \# edges & \# features & \# classes \\
\midrule
Cora          & 2,708   & 5,278     & 1,433 & 7 \\
CiteSeer      & 3,327   & 4,552     & 3,703 & 6 \\
PubMed        & 19,717  & 44,324    & 500   & 3 \\
Computers     & 13,752  & 245,861   & 767   & 10 \\
Photo         & 7,650   & 119,081   & 745   & 8 \\
\midrule[0.8pt] 
Squirrel      & 2,223   & 46,998    & 2,089 & 5 \\
Chameleon     & 890     & 8,854     & 2,325 & 5 \\
Roman-empire  & 22,662  & 32,927    & 300   & 18\\
Amazon-ratings& 24,492  & 93,050    & 300   & 5 \\
\bottomrule
\end{tabular}
}
\label{tab:appendix_dataset_statistics}
\end{center}
\end{table*}
\paragraph{Cora, CiteSeer, and PubMed}
Each node represents a paper, and an edge indicates a reference relationship between two papers. The task is to predict the research subjects of the papers.
\paragraph{Computers and Photo}
Each node represents a product, and an edge indicates a high frequency of concurrent purchases of the two products. The task is to predict the product category.
\paragraph{Squirrel and Chameleon}
Each node represents a Wikipedia page, and an edge indicates a link between two pages. The task is to predict the monthly traffic for each page. We use the classification version of the dataset, where labels are converted by dividing monthly traffic into five bins. We adopted the filtering process to prevent train-test data
leakage as recommended by \citep{platonov2023critical}.
\paragraph{Roman-empire}
Each node represents a word extracted from the English Wikipedia article on the Roman Empire, and an edge indicates a grammatical or sequential relationship between words. The task is to predict the part-of-speech tag of each word.
\paragraph{Amazon-ratings}
Each node represents a product from the Amazon co-purchasing network, and an edge indicates a frequent co-purchase between two products. The task is to predict the product category based on user co-purchasing patterns.

\section{Further Implementation Details} \label{apdx_sec:approximate}
In this section, we explain how to calculate the expectation terms of~\cref{eq:main-classwise-ot} in~\cref{thm:classwise-ot}.

Each $\pi'$ is independently sampled from the uniform distribution over all $(m+u)!$ permutations.
For a given permutation $\pi'$, we consider two estimators of the terms inside the expectations: \textit{Class-wise} and \textit{Class-wise approx}.

The \textit{Class-wise} approach assumes access to the labels of all nodes.
Under this assumption, every quantity, i.e., $u_c^{(\pi')}$, $m_c^{(\pi')}$, and
$\mathcal{W}\big(\phi_\#\mu_{\gI_{\rm{train},c}^{(\pi')}},\,\phi_\#\mu_{\gI_{\rm{test},c}^{(\pi')}}\big)$,
can be computed exactly.

In contrast, the \textit{Class-wise approx} approach only uses the labels of the training nodes, i.e., $\{y_i\}_{i\in\gI_{\text{train}}^{(\pi)}}$, and approximates all terms.
Specifically, \textit{Class-wise approx} replaces
\begin{align*}
u_c^{(\pi')}\approx|\mathcal{I}_{\text{test}}^{(\pi';\pi)}|, 
&\quad m_c^{(\pi')}\approx|\mathcal{I}_{\text{train}}^{(\pi';\pi)}|,\\
\text{and}\quad
\mathcal{W}\big(\phi_\#\mu_{\gI_{\rm{train},c}^{(\pi')}},\,\phi_\#\mu_{\gI_{\rm{test},c}^{(\pi')}}\big)
&\approx\mathcal{W}\big(\phi_\#\mu_{\gI_{\rm{train},c}^{(\pi';\pi)}},\,\phi_\#\mu_{\gI_{\rm{test},c}^{(\pi';\pi)}}\big), 
\end{align*}
where
\begin{equation*}
\gI_{\rm{train},c}^{(\pi';\pi)} = \gI_{\rm{train}}^{(\pi')}\cap\gI_{\rm{train},c}^{(\pi)}
\quad\text{and}\quad
\gI_{\rm{test},c}^{(\pi';\pi)} = \gI_{\rm{test}}^{(\pi')}\cap\gI_{\rm{train},c}^{(\pi)}\;.
\end{equation*}
We then approximate each expectation over $\pi'$ by the empirical average over $T$ sampled permutations:
$\E_{\pi'}[\cdot]\approx \frac{1}{T}\sum_{t=1}^T (\cdot)$.

We use $T=4$, because larger sample sizes ($16$ or $64$) do not yield consistent gains, whereas using smaller sizes ($1$) sometimes produces noticeably lower correlation.
Based on this observation, we selected four permutations as the most efficient choice.
The corresponding correlation results for SGC and GCN on Cora and Amazon-Ratings (with $1$, $4$, $16$, and $64$ samples) are reported in~\cref{tab:sample_efficiency}.
\begin{table}[t]
    \caption{Correlation values (with computation time in seconds) for estimating the expectation in Theorem 4.2 using different numbers of sampled permutations (1, 4, 16, 64). Results are reported for SGC and GCN on the Cora and Amazon-Ratings datasets under both the \textit{Class-wise} and \textit{Class-wise approx} settings.}
    \centering
    \resizebox{0.7\linewidth}{!}{
    \begin{tabular}{ccccccc} 
         \toprule 
         & & & \multicolumn{4}{c}{\# sampled permutations} \\
         \cmidrule(lr){4-7}
         & & & 1 & 4 & 16 & 64 
         \\
         \midrule\midrule
         \multirow{4}{*}{SGC} & \multirow{2}{*}{Cora} & \textit{Class-wise} & 0.84 (0.50s) & 0.89 (1.99s) & 0.9 (8.22s) & 0.88 (23.7s) \\
         && \textit{Class-wise approx} & 0.84 (0.36s) & 0.87 (1.67s) & 0.87 (6.21s) & 0.86 (20.1s) \\
         \cmidrule(lr){2-7}
         & Amazon- & \textit{Class-wise} & 0.76 (9.69s) & 0.93 (39.8s) & 0.89 (147s) & 0.91 (611s) \\
         & ratings & \textit{Class-wise approx} & 0.76 (0.83s) & 0.91 (3.31s) & 0.87 (13.2s) &  0.92 (49.2s)  \\
          \midrule
         \multirow{4}{*}{GCN} & \multirow{2}{*}{Cora} & \textit{Class-wise} & 0.7 (0.69s) & 0.81 (2.58s) & 0.75 (10.5s) & 0.78 (43.9s) \\
         && \textit{Class-wise approx} & 0.67 (0.17s) & 0.78 (0.68s) & 0.72 (2.84s) & 0.77 (12.1s)  \\
         \cmidrule(lr){2-7}
         & Amazon- & \textit{Class-wise} & 0.9 (11.4s) & 0.91 (45.0s) & 0.88 (180s) & 0.94 (728s) \\
         & ratings & \textit{Class-wise approx} & 0.88 (0.72s) & 0.91 (2.89s) & 0.87 (11.5s) & 0.94 (46.8s)  \\
         \bottomrule
    \end{tabular}
    }
    \label{tab:sample_efficiency}
\end{table}


\newpage
\section{Additional Results}\label{apdx:full_results}

\begin{table*}[h!]
    \caption{Correlation between empirical error gap and generalization bounds across datasets and GNNs. $1.0$, $0.9$ and $0.5$ correspond to $1.0$, $0.9$, and $0.5$ percentile of the change rate among all combination sets of $(i, j, y)$ for $M(f,\phi)$ from \cref{thm:global-ot}.}
    \centering
    \resizebox{1.0\linewidth}{!}{
    \begin{tabular}{llccccccccc}
         \toprule
          & $p$-percentile & Cora & CiteSeer & PubMed & Computers & Photo & Squirrel & Chameleon & Roman-empire & Amazon-ratings \\
         \midrule\midrule
         \multirow{3}{*}{SGC} & 1.0 & -0.03 & 0.19 & 0.12 & -0.51 & 0.06 & 0.01 & -0.37 & 0.67 & -0.12 \\
          & 0.9 & 0.82 & 0.92 & 0.97 & 0.13 & 0.85 & 0.97 & 0.82 & 0.85 & 0.99 \\
          & 0.5 & 0.92 & 0.94 & 0.96 & 0.18 & 0.87 & 0.98 & 0.90 & 0.85 & 0.98 \\
         \midrule
         \multirow{3}{*}{GCN} & 1.0 & 0.07 & 0.21 & 0.08 & -0.63 & -0.62 & 0.08 & 0.14 & -0.44 & -0.42 \\
          & 0.9 & 0.88 & 0.95 & 0.96 & 0.89 & 0.76 & 0.88 & 0.59 & 0.92 & 0.83 \\
          & 0.5 & 0.92 & 0.93 & 0.96 & 0.82 & 0.76 & 0.91 & 0.81 & 0.91 & 0.94 \\
         \midrule
         \multirow{3}{*}{GCNII} & 1.0 & 0.39 & 0.43 & 0.87 & -0.49 & -0.51 & 0.09 & 0.03 & 0.83 & 0.63 \\
          & 0.9 & 0.87 & 0.76 & 0.69 & 0.89 & 0.90 & 0.13 & -0.31 & 0.53 & 0.88 \\
          & 0.5 & 0.81 & 0.70 & 0.70 & 0.94 & 0.87 & 0.11 & -0.33 & 0.50 & 0.94 \\
         \midrule
         \multirow{3}{*}{GAT} & 1.0 & 0.23 & 0.37 & 0.45 & 0.57 & 0.51 & 0.65 & 0.84 & 0.81 & 0.62 \\
          & 0.9 & 0.57 & 0.79 & 0.74 & 0.85 & 0.82 & 0.90 & 0.63 & 0.89 & 0.96 \\
          & 0.5 & 0.56 & 0.92 & 0.85 & 0.90 & 0.90 & 0.96 & 0.92 & 0.86 & 0.98 \\
         \midrule
         \multirow{3}{*}{SAGE} & 1.0 & 0.51 & 0.60 & 0.13 & -0.72 & -0.59 & 0.49 & 0.76 & 0.24 & 0.62 \\
          & 0.9 & 0.86 & 0.72 & 0.90 & 0.73 & 0.56 & 0.65 & 0.86 & 0.32 & 0.96 \\
          & 0.5 & 0.78 & 0.86 & 0.88 & 0.95 & 0.72 & 0.72 & 0.79 & 0.20 & 0.96 \\
         \bottomrule
    \end{tabular}
    }
    \label{tab:apdx_percentile_result}
\end{table*}

\cref{tab:apdx_percentile_result} demonstrates that our bounds remain effective even when using the 0.5 percentile instead of the 0.9 percentile, indicating robustness to the choice of percentile threshold. When using the 1.0 percentile (i.e., the maximum), performance degrades noticeably, as outlier values can cause $M(f,\phi)$ to become excessively large. This suggests that selecting an appropriate percentile is beneficial for practical use. Nevertheless, even with the 1.0 percentile, our bounds still outperform existing baselines in most cases.

\end{document}